\documentclass[10pt]{article} 
\usepackage[preprint]{tmlr}
\usepackage{amsmath}
\usepackage{amssymb}
\usepackage{amsfonts}
\usepackage{tikz}
\usetikzlibrary{arrows.meta,decorations.pathreplacing}
\usepackage{subcaption}
\usepackage[ruled,vlined]{algorithm2e}
\usepackage{enumerate}
\usepackage{booktabs}
\usepackage{hyperref}
\newcommand{\real}{\mathbb{R}}

\newcommand{\C}{\mathcal{C}}

\newtheorem{theorem}{Theorem}[section]

\newtheorem{definition}{Definition}

\newtheorem{proposition}[theorem]{Proposition}

\newtheorem{remark}{Remark}

\newcommand{\new}[1]{{\color{black} #1}}

\title{Provably Safe Generative Sampling with Constricting Barrier Functions}
\author{\name Darshan Gadginmath \email gadginmath@merl.com \\
      \addr Mitsubishi Electric Research Laboratories\\
      Cambridge, Massachusetts
      \AND
      \name Ahmed Allibhoy \email aallibho@uci.edu \\ 
      \addr Electrical Engineering and Computer Science \\
      University of California, Irvine
      \AND
      \name Fabio Pasqualetti \email fabiopas@uci.edu \\
      \addr Electrical Engineering and Computer Science\\
      University of California, Irvine}

\def\month{08}  
\def\year{2026} 
\def\openreview{\url{https://openreview.net/forum?id=iZi471b4Pf}} 

\begin{document}

\maketitle

\begin{abstract}
Flow-based generative models, such as diffusion models and flow matching models, have achieved remarkable success in learning complex data distributions. However, a critical gap remains for their deployment in safety-critical domains: the lack of formal guarantees that generated samples will satisfy hard constraints. We propose a safety filtering framework that acts as an online shield for any pre-trained generative model. Our key insight is to cooperate with the generative process rather than override it. We define a constricting safety tube that is relaxed at the initial noise distribution and progressively tightens to the target safe set at the final data distribution, mirroring the coarse-to-fine structure of the generative process itself. By characterizing this tube via Control Barrier Functions (CBFs), we synthesize a feedback control input through a convex Quadratic Program (QP) at each sampling step. As the tube is loosest when noise is high and intervention is cheapest in terms of control energy, most constraint enforcement occurs when it least disrupts the model's learned structure. We prove that this mechanism guarantees safe sampling \new{in discrete-time. The minimum-norm control synthesized at each step minimizes the per-step contribution to the KL divergence between the safe and original distributions. Across all experiments, we observe 100\% constraint satisfaction.} Our framework applies to any pre-trained flow-based sampling scheme requiring no retraining or architectural modifications. We validate the approach across constrained image generation, physically-consistent trajectory sampling, and safe robotic manipulation policies, achieving 100\% constraint satisfaction while preserving semantic fidelity.
\end{abstract}

\section{Introduction}

Flow-based generative models such as diffusion models~\citep{JH-AJ-PA:2020,YS-etal:2021}, flow matching~\citep{YL-etal:2023}, and continuous normalizing flows~\citep{GP-etal:2021} have redefined the state-of-the-art in learning complex, high-dimensional distributions. By transforming a simple prior noise distribution into a structured data distribution through a sequence of infinitesimal steps, these models excel in tasks such as molecular design~\citep{TW-etal:2023}, high-fidelity image synthesis~\citep{JH-etal:2022}, and control policies for robots~\citep{CC-etal:2025}. The deployment of these models increasingly requires constraint satisfaction across diverse applications. In content generation, constraints ensure human alignment, which requires filtering harmful content in images~\citep{PS-MB-BD-KK:2023} or preventing toxic text generation~\citep{ZL-etal:2025}. In safety-critical physical systems such as robotics~\citep{MJ-YD-JT-SL:2022,CC-etal:2025} or autonomous navigation~\citep{BL-etal:2025}, constraints encode inviolable physical laws or safety specifications such as dynamical feasibility, joint limits, collision avoidance, smoothness, etc. 

Traditional soft guidance techniques, such as classifier-based, classifier-free~\citep{PD-AN:2021,JH-TS:2021} or reward-weighted guidance~\citep{HY-etal:2023}, merely act as probabilistic incentives. While they bias the model toward desired regions, they cannot provide provable guarantees of feasibility. Conversely, projection-based methods can guarantee safety by formally projecting completed samples onto a safe manifold~\citep{NF-etal:2023,UU-etal:2025}, yet they often suffer from significant computational overhead and introduce large distributional shifts.

\begin{figure}[ht]
    \centering
    \includegraphics[width=0.7\linewidth]{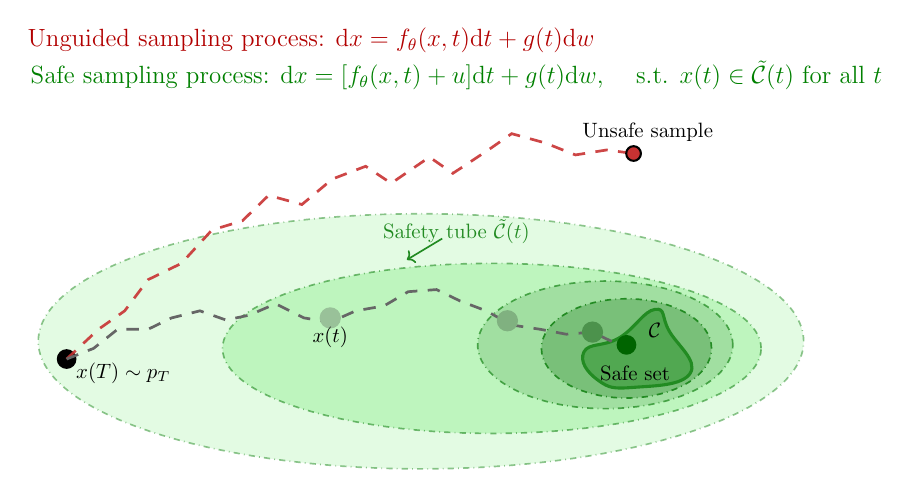} 
    \caption{\emph{Constrained generative sampling with a constricting safety tube.} 
    The flow-based sampling process transforms noise sample $x(T)$ into data $x(0)$. \emph{Unconstrained sampling process} can produce unsafe samples that violate constraints, landing outside the required safe set. Our control barrier function-guided sampling augments the generative dynamics with feedback control inputs $u$ that maintain the sample within the constricting safety tube $\tilde{\C}(t)$ (green region) throughout the sampling process, guaranteeing safe samples and minimal perturbation of the learned sampling process.}
    \label{fig:gen-sampling}
\end{figure}
To address this gap, we propose a guidance scheme motivated by safety filtering in control theory. By leveraging Control Barrier Functions (CBFs), we synthesize a feedback control input through a Quadratic Program (QP) that aims to retain the fidelity of the original model. Our key insight is the use of a constricting safety tube $\tilde{\C}(t)$ that is relaxed at the high-noise regime ($t=T$) and progressively constricts to the target safe set $\C$ at the data distribution ($t=0$). As illustrated in Figure~\ref{fig:gen-sampling}, this ensures that the final sample belongs to the safe set while formally bounding the distributional shift between the original model and the safe distribution. We frame the guidance problem as a problem of control synthesis. The goal is to inject a feedback control input $u$ into the sampling process to render the safety tube $\tilde{\C}(t)$ invariant, ensuring that trajectories provably remain within the safety tube for the entire duration. CBFs provide the mathematical framework needed to synthesize such control inputs with formal safety certificates regarding the membership of $x(t)$ in $\tilde{\C}(t)$. Our framework ensures that the sample at the final sampling step $x(0)$ lies within the safe set $\C$ which satisfies the required constraints. At each sampling step, we solve a constrained optimal control problem that minimizes control effort to preserve distributional fidelity while enforcing a CBF constraint that guarantees $x(t) \in \tilde{\C}(t)$ for all $t \in [0, T]$.  

Unlike prior CBF-based approaches~\citep{WX-etal:2023,JY-SJ-SJH:2025,XD-etal:2025_safeflow} that employ prescribed-time convergence or post-hoc trajectory correction, our method enforces strict invariance of a constricting safety tube that tightens during the sampling process, providing safety guarantees on sampling. Our main contributions are:
\begin{enumerate}
\item \emph{Provably safe sampling:} For any closed $\C$, we prove that our
guidance mechanism based on CBFs ensures that the final sample $x(0) \in \C$
\new{exactly, for every realized noise sequence in discrete time (and deterministically when $g \equiv 0$). The continuous-time formulation provides design intuition through a formal invariance result.} Importantly, we make no assumptions about the convexity of the safe set.
\item \emph{Cooperation with the generative process:} Our constricting safety
    tube mirrors the coarse-to-fine structure of flow-based sampling, concentrating
    constraint enforcement in the high-noise regime where interventions are
    distributionally cheap. This ensures that the model retains full authority over
    the semantic structure and fine details. We prove that our minimum-norm control minimizes the per-step contribution to the KL divergence bound between the safe and original distributions.

\item \emph{Modular guidance}: Our guidance scheme can be applied to
    any pre-trained flow-based generative model at sampling time, requiring no
    retraining or architectural modifications. 

\end{enumerate}
We validate our approach on three different experiments: simulation of nonlinear physics, constrained image generation, safety-critical planning in robotics.

\section{Related Work}
The literature on enforcing constraints in generative models is rapidly growing, typically bifurcated into methods that provide probabilistic incentives and those that formally enforce hard constraints.
\paragraph{Soft guidance and probabilistic steering.} Soft guidance methods generally modify the sampling process by adding a penalty or a steering term that biases the model toward desired regions of the data distribution. Classifier guidance~\citep{PD-AN:2021} utilize the probabilistic confidence from a classifier, or implicitly compute confidence in classifier-free guidance ~\cite{JH-TS:2021} to steer the score function towards specific attributes or labels. Similarly, reward-guided methods~\cite{HY-etal:2023} prioritize trajectories that maximize rewards, while gradient-based approaches~\citep{YG-etal:2024} steer sampling via optimization objectives. While these methods effectively increase the likelihood of constraint satisfaction, they do not provide formal guarantees of admissibility. They lack a mechanism to explicitly ensure safe sampling. They are fundamentally probabilistic and may still produce samples that violate safety guidelines, making them unsuitable for safety-critical hardware where failure is impermissible.
\new{A growing body of inference-time alignment methods steers sampling toward
high-reward regions without retraining~\citet{MU-etal:2025_diffalignment}. These include value-based and derivative-free reward
guidance~\citep{XL-etal:2024_derfree}, which bias the trajectory toward a reward
without altering the model. A recurring theme in this line of work is the
tension between reward optimization and generation quality or diversity: pushing
hard toward a reward degrades sample quality, while staying close to the prior
under-optimizes the objective~\citep{SK-MK-DP:2025}. This tradeoff motivates our
quantitative study of how hard guidance affects the generated distribution
(Section~\ref{sec:schedule-ablation}), where we report FID, KID, and Vendi
scores before and after guidance. Crucially, all of these reward-based methods
remain probabilistic incentives: they raise the likelihood of constraint
satisfaction but offer no admissibility certificate.}

\paragraph{CBF-based guidance in sampled robotic planning and control.} A recent line of literature in sampling based robotic planning and control have increasingly turned to CBFs~\citep{KM-KL:2024,XD-etal:2025_safeflow,WX-etal:2023,JY-SJ-SJH:2025} to transform soft heuristics into formal safety certificates. \cite{KM-KL:2024} use a CBF-CLF framework to define a reward function, which is in turn used for reward-based guidance. Both~\cite{XD-etal:2025_safeflow} and~\cite{WX-etal:2023} employ prescribed-time CBF mechanisms with singular class-$\mathcal{K}$ functions that enforce convergence to the safe set by a deadline, requiring high-gain feedback that can lead to aggressive steering. \cite{JY-SJ-SJH:2025} adopts a prediction-correction architecture that generates a candidate path in latent space and then corrects it in a separate, post-hoc phase. In contrast, our method enforces strict invariance throughout the entire sampling process by utilizing a constricting safety tube. This ensures the particle remains safe at every integration step, rather than only at the final time. By using a minimum-norm control synthesis objective, we maintain maximal model fidelity without the need for high-gain steering. Our approach integrates the safety barrier directly into the sampling dynamics as a control intervention. We prove that our method minimizes the per-step contribution to the KL divergence bound between the safe and learned distributions. \new{A concurrent control-theoretic approach is HardFlow~\citep{ZL-KA-ZA:2025},
which casts hard-constrained sampling as a trajectory-optimization problem and
uses a single-step model-predictive-control surrogate to enforce constraints at
the terminal time. We differ in two respects: we render a constricting tube
invariant at \emph{every} step rather than only terminally, and our per-step
intervention is a minimum-norm quadratic program with a closed-form solution
(Remark~\ref{rem:feasibility}) rather than an optimal-control surrogate
whose guarantee holds up to a separate approximation error.}

\section{Preliminaries: Flow-based sampling and CBFs}
Here we introduce essential mathematical preliminaries. We introduce generative sampling from the perspective of dynamical systems and the use of CBFs for safe control.  

\subsection{Flow-based sampling as a dynamical system}
\label{sec:sde_prelims}
We formalize generative sampling as the trajectory of an autonomous system that transforms a prior noise distribution into a target data distribution. We define $t \in [0, T]$ as the sampling time, where $p(T)$ is the initial noise distribution at $t = T$, and $p(0)$ is the final data distribution at $t = 0$. The sampling process is governed by a stochastic differential equation (SDE):
\begin{align}
    \mathrm{d}x &= f_\theta(x,t)\mathrm{d}t + g(t)\mathrm{d}w,\label{eqn:samplingSDE}
\end{align}
where the sample is $x(t) \in \mathbb{R}^n$, and $f_\theta(x,t)$ is the drift vector field. Further, 
$\mathrm{d}w$ is a reverse-time Wiener process~\citep{YS-etal:2021}, 
reflecting that $t$ decreases from $T$ to $0$ during sampling. \new{The noise schedule $g(t)$ depends on $t$ alone and is independent of the state $x$, so the noise enters the dynamics additively. This state-independence will play a central role in our analysis: it ensures that the noise contribution to the trajectory is decoupled from the state, a property we exploit both in the continuous-time and, more importantly, in the rigorous discrete-time analysis.} We are able to capture a large family of flow-based generative models using this notation. For example, in score-based diffusion models~\citep{YS-etal:2021}, $f_\theta(x,t) = -\frac{1}{2}x - g^2(t)\nabla_x \log p(x,t)$ where the score $\nabla_x \log p(x,t)$ is approximated by a neural network. In flow matching~\citep{YL-etal:2023}, $f_\theta = \text{NN}_\theta(x,t)$ directly learns the velocity field that generates optimal transport paths between noise and data distributions. For deterministic frameworks like flow matching, the noise term vanishes, i.e. $g(t) = 0$, reducing the dynamics to an ordinary differential equation (ODE). 

While the SDE~\eqref{eqn:samplingSDE} describes individual sample paths, 
it induces a corresponding evolution of the population's probability 
density $p(x, t)$ as $t$ decreases from $T$ to $0$. This progression has 
an important structural property: in the high-noise regime,
the broad support of $p(x,t)$ means the model establishes only global, 
coarse structure. As $t \to 0$ and the distribution concentrates toward 
the data distribution $p(0)$, the model progressively resolves finer 
details. We refer to this as the \emph{coarse-to-fine} structure of 
flow-based sampling, and our constricting safety tube in section~\ref{sec:methodology} is designed to mirror this progression. This coarse-to-fine progression has a direct consequence for constrained sampling: interventions applied during the high-noise regime, where the model has not yet committed to fine-grained structure, are less disruptive than those applied near $t = 0$ when the sample has nearly converged.

\new{In practice, samples are generated by numerically simulating~\eqref{eqn:samplingSDE} in discrete-time. We discretize the interval $[0,T]$ into $K$ steps of size $\Delta t = T/K$, with time indices $t_k = k\Delta t$ for $k = K, K-1, \ldots, 0$, and use the notation $x_k = x(t_k)$. The Euler--Maruyama scheme for~\eqref{eqn:samplingSDE} gives the update
\begin{equation}
    x_{k-1} = x_k - f_\theta(x_k, t_k)\,\Delta t + g(t_k)\sqrt{\Delta t}\,\xi_k, \qquad \xi_k \sim \mathcal{N}(0, I), \label{eqn:euler_maruyama}
\end{equation}
Here $k$ decreases in the sampling process, hence the minus sign in the drift. In the simulated sampler, the noise increment $\xi_k$ is drawn from a standard Gaussian distribution.}

\subsection{Control Barrier Functions for Set Invariance}
\label{sec:CBF_prelims}
\new{Here we review the theory of control barrier functions to design feedback controllers
to enforce reverse invariance of the constraint set. The design approach we develop in the sequel modifies and extends the
standard control barrier function (CBF) framework, which we review here for
completeness.} Consider a system whose state $x(t) \in \mathbb{R}^n$ evolves as $t$ decreases from $T$ to $0$, governed by:
\begin{align}
    \mathrm{d}x = f(x, u)\,\mathrm{d}t, \label{eq:control-system}
\end{align}
where $u \in \mathbb{R}^m$ is the control input. Since $t$ decreases during sampling, $\mathrm{d}t < 0$ throughout, and $f(x, u)$ represents the drift of the reverse-time dynamics, consistent with the sampling process~\eqref{eqn:samplingSDE}. Assume that we are given a set $\mathcal{C} \subset \mathbb{R}^n$ of ``safe'' states, given by the superlevel set of a continuously differentiable function $h : \mathbb{R}^n \to \mathbb{R}$:
\begin{align}
    \mathcal{C} = \{x \in \mathbb{R}^n \mid h(x) \geq 0\}. \label{eq:safe-set}
\end{align}
We seek a feedback controller that keeps the state in $\mathcal{C}$ throughout sampling, designed via control barrier functions \cite{ADA-XX-JWG-PD:2016, AAD-SC-etal_2019}. We say $\mathcal{C}$ is \emph{reverse invariant}\footnote{Standard literature in safety-critical control treats systems evolving forward in time, deriving conditions for forward invariance. Since generative sampling evolves backward in time, we instead enforce reverse invariance. Our presentation follows \cite{AAD-SC-etal_2019}, with the CBF condition~\eqref{eqn:cbf-condition} modified for the reversed time direction.} if, for any terminal condition $x(T) \in \mathcal{C}$, the trajectory satisfies $x(t) \in \mathcal{C}$ for all $t \in [0, T]$ as $t$ decreases from $T$ to $0$; equivalently, $h(x(T)) \geq 0$ implies $h(x(t)) \geq 0$ for all $t \in [0, T]$.

We say $h$ is a \emph{reverse-time control barrier function}\footnote{For brevity we henceforth refer to reverse-time control barrier functions simply as CBFs, since we only deal with systems evolving backward in time.} if for all $x \in \C$ there exists $u \in \real^m$ such that
 \begin{align}
    \nabla h(x) \cdot f(x, u) \leq \gamma(h(x)), \label{eqn:cbf-condition}
\end{align}
where $\gamma$ is a class-$\mathcal{K}$ function. Any feedback controller satisfying~\eqref{eqn:cbf-condition} renders $\C$ reverse invariant. Intuitively, on the boundary $\partial\C$ where $h(x)=0$ the condition forces $\mathrm{d}h/\mathrm{d}t \leq 0$, which in reverse time (decreasing $t$) keeps $h$ non-decreasing, so the trajectory cannot leave $\C$; in the interior $\gamma(h(x)) > 0$ permits $h$ to vary while continuity of $\gamma$ drives $\mathrm{d}h/\mathrm{d}t \to 0$ as $x \to \partial\C$.

\new{The CBF condition~\eqref{eqn:cbf-condition} also yields guarantees for discretizations of~\eqref{eq:control-system}. For a system simulated on a discrete time grid $t_k = k\Delta t$ with state update $x_{k-1} = x_k - f(x_k, u_k)\Delta t$ (reverse-time convention $\mathrm{d}t < 0$), and a linear class-$\mathcal{K}$ function $\gamma(h) = \alpha h$ with $\alpha > 0$, the corresponding discrete-time CBF condition is
\begin{equation}
    h(x_{k-1}) \geq (1 - \alpha\,\Delta t)\,h(x_k). \label{eqn:discrete-cbf}
\end{equation}
This ensures $h(x_k) \geq 0$ implies $h(x_{k-1}) \geq 0$ for all $\alpha \Delta t \leq 1$, the discrete analogue of reverse invariance, and reduces to~\eqref{eqn:cbf-condition} as $\Delta t \to 0$. Condition~\eqref{eqn:discrete-cbf} has two advantages in our setting. First, $h(x_{k-1})$ is determined by the simulated dynamics including the observed noise increment, so the condition translates into an algebraic constraint on $u_k$ once $\xi_k$ is drawn. Second, invariance follows by induction on $k$ rather than via continuous-time theorems requiring a differentiable trajectory. We use~\eqref{eqn:discrete-cbf} as the basis for our rigorous safety analysis, while~\eqref{eqn:cbf-condition} provides the design intuition.}

\section{Guidance of flow-based models via constricting CBFs}
\label{sec:methodology}

Having established the sampling dynamics~\eqref{eqn:samplingSDE} and the CBF framework for safe control, we address the central question: how can we enforce hard constraints on the output of a pre-trained generative model without modifying the model itself? The key observation is that by introducing a control input to the sampling SDE~\eqref{eqn:samplingSDE}, we arrive at a control-affine structure \new{analogous to the control systems considered in Section~\ref{sec:CBF_prelims}}. This additive control input $u$ can be designed to steer sampling trajectories toward the safe set while preserving the learned distribution. This is a control synthesis problem with the following guided sampling process:
\begin{align}
\label{eqn:guided-sampling-process}
\mathrm{d}x = [f_\theta(x, t) + u(x, \xi, t)]\mathrm{d}t + g(t)\mathrm{d}w, \quad t \in [0, T],
\end{align}
where $f_\theta$ is the unconstrained drift learned by the generative model, $g(t)$ is the noise schedule, $w$ is a Wiener process, and $u(x, \xi, t)$ is the feedback control that depends on the current state $x$ and the realized noise\footnote{\new{Here $\xi(t)$ is the Gaussian white-noise process whose integral recovers the driving Wiener process, $\int_{T}^{t} \xi(s)\,\mathrm{d}s = w(t)$. This is the formal identification of white noise with the increments of Brownian motion. 
}} 
$\xi$. The challenge is to design $u$ so that it provides formal safety guarantees while remaining minimally invasive, intervening only when necessary and as little as possible.

Note that our proposed feedback controller depends both on the state of the system and the noise process. Although this assumption may be restrictive in settings where the feedback controller regulates a physical system, and the noise cannot be directly measured, in our application we have full knowledge of the noise process since the system is being simulated on a computer. \new{This observe-then-act structure is what enables the rigorous discrete-time safety guarantees we establish in the next section. Controllers depending only on the state cannot in general provide analogous guarantees (cf.\ Remark~\ref{remark:stochastic-cbf-comparison}).}

\subsection{\new{Constricting} CBFs for flow-based generative sampling}\label{sec:cbf-flow}
A fundamental challenge in generative sampling is that the initial prior noise $x(T)$ is drawn from a distribution such as the standard Gaussian. This means that $x(T)$ almost certainly resides outside the target safe set $\mathcal{C}$. Traditional CBF formulations for static sets would require an immediate, high-energy intervention to push $x(t)$ close to $\mathcal{C}$, which would violate the model's generative intent and produce samples that no longer represent the learned distribution. Our framework is motivated by the need for minimal interventions and a smooth approach towards $\mathcal{C}$. We achieve this by defining a constricting barrier $\tilde{h}(x, t) = h(x) + \epsilon(x(T), t)$. This is a time-varying barrier that constricts with the sampling process. We characterize safety through a constricting superlevel tube $\tilde{\mathcal{C}}(t)$ with respect to the constricting barrier $\tilde{h}$ as,
\begin{align}
\tilde{\mathcal{C}}(t) = \{x \in \mathbb{R}^n \mid \tilde{h}(x, t) \geq 0\}.
\end{align}
We design $\tilde{\mathcal{C}}(t)$ such that initially, the set $\tilde{\mathcal{C}}(T)$ is sufficiently relaxed to contain the noise sample $x(T)$, and at the end of sampling, it recovers the target set $\mathcal{C}$, i.e., $\tilde{\mathcal{C}}(0) = \mathcal{C}$. Since the sampling process evolves in reverse time from $t = T$ to $0$, the safety tube $\tilde{\mathcal{C}}(t)$ must remain occupied by $x(t)$ as $t$ decreases, a property we term \emph{reverse invariance}. As derived in Section~\ref{sec:CBF_prelims}, the reverse-time CBF condition requires bounding $\nabla \tilde{h} \cdot f$ from above. We design the constricting barrier function through a time-varying control barrier function.

\begin{definition}[Constricting barrier function]
\label{def:constrictingCBF}
Given a CBF $h : \mathbb{R}^n \to \mathbb{R}$ and a terminal condition
$x(T)$, a constricting barrier function is
$\tilde{h}(x, t) = h(x) + \epsilon(x(T), t)$, where $\epsilon$ is a
$C^1$ function satisfying:
\begin{enumerate}
    \item \textbf{Initial feasibility:} $\epsilon(x(T), T) \geq
    \max(0, -h(x(T)))$.
    \item \textbf{Recovery of target set:} $\epsilon(x(T), 0) = 0$.
    \item \textbf{Monotone constriction:} $\partial \epsilon / \partial t
    \geq 0$ for all $t \in [0, T]$. Since $t$ decreases from $T$ to $0$
    during sampling, this means $\epsilon$ decreases as sampling
    progresses, constricting $\tilde{\mathcal{C}}(t)$
    toward $\mathcal{C}$.
\end{enumerate}
The associated constricting safety tube is
$\tilde{\mathcal{C}}(t) = \{x \in \mathbb{R}^n \mid \tilde{h}(x,t) \geq 0\}$.
\end{definition}

\new{The concept of a time-varying barrier that tightens onto the target set was introduced in the deterministic setting for prescribed-time safe control, where the constraint set is deformed on a fixed schedule so that safety is achieved by a deadline~\citep{DG-AA-FP:2026_lcss}. For flow-based generative 
sampling, the schedule is the sampling horizon itself.} The following result, \new{which is the reverse-time 
analog of \cite[Theorem 1]{DG-AA-FP:2026_lcss}}, \new{characterizes the continuous-time condition under which} the sampling process~\eqref{eqn:guided-sampling-process} can be made invariant to the safety tube $\tilde{\mathcal{C}}(t)$ \new{in the deterministic setting}.

\new{
\begin{proposition}[Reverse invariance, deterministic case]
\label{prop:forward_invariance}
Let $\tilde{h}$ be a constricting barrier function, $\alpha > 0$, and suppose $g \equiv 0$, so that the guided
sampling process~\eqref{eqn:guided-sampling-process} reduces to the
deterministic dynamics $\dot x = f_\theta(x,t) + u(x,t)$. If the control
$u(x,t)$ satisfies
\begin{align}
    \nabla_x \tilde{h}(x, t) \cdot (f_\theta(x,t) + u(x,t))
    + \frac{\partial \tilde{h}(x, t)}{\partial t} \leq \alpha \tilde{h}(x,t),
    \label{eqn:constricting-cbf-deterministic}
\end{align}
for all $(x,t)$, then the trajectory satisfies $x(t)\in\tilde{\mathcal C}(t)$ for all
$t\in[0,T]$, and in particular $x(0)\in\mathcal C$.
\end{proposition}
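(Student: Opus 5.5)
We track the scalar function $\phi(t) = \tilde h(x(t),t)$ along the closed-loop trajectory and show it stays nonnegative as $t$ decreases from $T$ to $0$. Since $h$ is $C^1$, $\epsilon$ is $C^1$, and the trajectory of $\dot x = f_\theta + u$ is differentiable (we assume $u$ regular enough, e.g.\ locally Lipschitz, for a unique solution on $[0,T]$), $\phi$ is $C^1$ and the chain rule gives
\begin{align*}
\dot\phi(t) = \nabla_x \tilde h(x(t),t)\cdot\big(f_\theta(x(t),t)+u(x(t),t)\big) + \frac{\partial \tilde h}{\partial t}(x(t),t).
\end{align*}
Hypothesis~\eqref{eqn:constricting-cbf-deterministic} then yields the differential inequality $\dot\phi(t) \le \alpha\,\phi(t)$ for all $t\in[0,T]$.

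Next, we integrate this inequality backward in time. Define $\psi(t) = e^{-\alpha t}\phi(t)$. Then $\dot\psi = e^{-\alpha t}(\dot\phi - \alpha\phi) \le 0$, so $\psi$ is nonincreasing in $t$. For $t \le T$ this gives $\psi(t) \ge \psi(T)$, that is,
\begin{align*}
\phi(t) \ge e^{-\alpha(T-t)}\,\phi(T).
\end{align*}
This comparison step is the only nontrivial point. The direction matters: because sampling runs with decreasing $t$, an \emph{upper} bound on $\dot\phi$ produces a \emph{lower} bound on $\phi$ at earlier times. This is exactly why the reverse-time CBF condition bounds $\nabla\tilde h\cdot f$ from above rather than below.

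It remains to check that $\phi(T) \ge 0$. By initial feasibility in Definition~\ref{def:constrictingCBF}, $\phi(T) = h(x(T)) + \epsilon(x(T),T) \ge h(x(T)) + \max(0,-h(x(T))) \ge 0$. Hence $\phi(t)\ge 0$, i.e.\ $x(t)\in\tilde{\mathcal C}(t)$, for every $t\in[0,T]$. At $t=0$, recovery of the target set gives $\epsilon(x(T),0)=0$, so $\tilde h(x(0),0)=h(x(0))\ge 0$ and therefore $x(0)\in\mathcal C$. Monotone constriction is not needed for the invariance argument itself; it only ensures that the tube shrinks toward $\mathcal C$.
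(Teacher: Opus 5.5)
Your proof is correct and matches the argument the paper relies on. The paper gives no in-text proof of this proposition; it presents it as the reverse-time analogue of \cite[Theorem 1]{DG-AA-FP:2026_lcss}, and your integrating-factor bound $\tilde h(x(t),t)\ge e^{-\alpha(T-t)}\,\tilde h(x(T),T)$ is exactly that analogue. It is also the continuous-time counterpart of the paper's induction proof of Theorem~\ref{thm:discrete-invariance}, with the per-step factor $(1-\alpha\Delta t)$ replaced by $e^{-\alpha\Delta t}$, and it uses initial feasibility at $t=T$ and target recovery at $t=0$ in the same way (you are also right that monotone constriction plays no role).
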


}

\new{Proposition~\ref{prop:forward_invariance} establishes the design rigorously in the deterministic setting. In the stochastic case ($g > 0$), the same construction motivates enforcing the analogous condition along each realized noise path,
\begin{align}
    \nabla_x \tilde{h}(x, t) \cdot (f_\theta(x,t) + u(x,\xi,t) + g(t)\xi(t))
    + \frac{\partial \tilde{h}(x, t)}{\partial t} \leq \alpha\tilde{h}(x,t),
    \label{eqn:constricting-cbf-condition}
\end{align}
where $g(t)$ is the noise schedule and $\xi(t)$ denotes the realized noise (c.f. footnote 3)
along the trajectory, which the controller observes before acting. 
A full formalization of this continuous-time stochastic setting requires interpreting the trajectory as the solution of a constrained SDE, for which the Skorokhod problem~\citep{HT:1979} provides the appropriate framework. We instead establish the rigorous results in Section~\ref{sec:methodology_discrete} for the discretization, the setting relevant to samplers used in practice.}

Proposition~\ref{prop:forward_invariance} \new{and the stochastic condition~\eqref{eqn:constricting-cbf-condition}} motivate the design: any control $u$ satisfying the CBF condition~\eqref{eqn:constricting-cbf-condition} would deliver $x(0) \in \mathcal{C}$, regardless of the convexity of $\mathcal{C}$, the architecture of $f_\theta$, or the location of the initial noise sample $x(T)$. The key mechanism is the interplay between the constriction and the gain $\alpha$. At the onset of sampling ($t \approx T$), $\tilde{h} = h + \epsilon$ is large due to the relaxation $\epsilon$. As sampling progresses, $\epsilon \to 0$ and $\tilde{h} \to h$, so $\alpha\tilde{h}$ shrinks and the constraint tightens toward the standard CBF condition for $\mathcal{C}$ guided by the feedback control $u(x,\xi,t)$. By this point, the learned model $f_\theta$ has already resolved the trajectory close to $\mathcal{C}$, and the control $u$ needs only to enforce the final constraint boundary. Many fixed-time convergence schemes rely on singular class-$\mathcal{K}$ functions that blow up as $t \to 0$~\citep{WX-etal:2023, XD-etal:2025_safeflow}. In contrast, our $C^1$ constriction mimics the coarse-to-fine generation scheme of typical flow-based models, ensuring that any feasible control remains bounded and minimally invasive. \new{We emphasize that the safe sampling guarantee concerns} membership in the safe set $\mathcal{C}=\{x : h(x)\ge 0\}$ defined in the space in which the sampling process evolves, when the barrier $h$ is expressed directly on the model's output (data) space. \new{This co-location is essential: the constricting CBF condition~\eqref{eqn:constricting-cbf-condition} constrains the barrier value, and the control steers along $\nabla_x \tilde h$ in the same space the sampler integrates. The certificate is therefore exact precisely when the barrier and the state share a space.} In our image experiments this space is pixel space, in the Lorenz experiment the trajectory space, and in the robotics experiment the action space, \new{and in each the constraint is imposed directly on the sampled variable, so the guarantee holds exactly. A distinct case arises when the safe set is defined in one space while the sampler evolves in another. The most important instance is the latent diffusion model, where the sampler evolves a latent $z$ but the constraint $\mathcal{C}$ lives in the decoded image space. The barrier the sampler sees is then the composition $h \circ D$, and its gradient $\nabla_z (h \circ D)$ is the decoder-pulled-back image-space gradient. The discrete-time guarantee certifies membership in the latent preimage $D^{-1}(\mathcal{C})$, which coincides with $\mathcal{C}$ exactly when the decoder $D$ is invertible. For the VAE decoders used in latent diffusion, $D$ is not invertible, and the certificate in latent space transfers to the decoded image up to the decoder's reconstruction error.} We discuss this in Section~\ref{sec:limitations} and demonstrate it empirically.

\begin{remark}[Comparison to other approaches that extend CBFs to stochastic systems]
\label{remark:stochastic-cbf-comparison}
Unlike works that consider designing feedback controllers that enforce safety of stochastic dynamical systems \citep{AC:21}, our approach assumes that the feedback controller has knowledge of the state as well as the noise. The latter is reflective of the operational reality 
of a sampling algorithm, where unlike the case of a physical system subject to disturbances, the 
noise is simulated on a computer and one is capable of observing it. The use of noise-dependent 
feedback controllers allows us to obtain rigorous guarantees of invariance of the constricting tube in discrete-time, unlike controllers depending only on the state which cannot 
enforce invariance exactly or almost-surely~\citep{OS-AC-CF:23}.
\end{remark}

\new{The stochastic condition~\eqref{eqn:constricting-cbf-condition} reflects the design principle that, at each instant, the controller observes the noise and acts on the observed value, rather than acting only in expectation over future noise.} While Gaussian noise has unbounded support and extreme realizations could theoretically require large control effort, our experiments (Section~\ref{sec:experiments}) demonstrate reliable feasibility across diverse applications. Alternative formulations that use model predictive control require stochastic CBFs~\citep{SP-AJ-GJP:2004} to predict the behavior of model and the noise. Here, chance constraints can provide formal high-probability bounds by incorporating safety margins proportional to the noising scheme $g(t)$. We defer such extensions to future work.

We now discuss candidates for the constriction scheme $\epsilon(x(T),t)$. 
Let $\epsilon_0 = \max(0, -h(x(T)))$ denote the initial constraint 
violation. A key practical advantage of Definition~\ref{def:constrictingCBF} 
is that its conditions are mild enough to be satisfied by a broad family 
of simple, closed-form functions. We present three such candidates: 
(1) \emph{Linear}: $\epsilon(x(T), t) = \epsilon_0 \cdot (t/T)$ with 
constant constriction rate, (2) \emph{Exponential}: $\epsilon(x(T), t) 
= \epsilon_0 \cdot (e^{\lambda t/T} - 1)/(e^{\lambda} - 1)$ for $\lambda 
> 0$, which front-loads relaxation when $\lambda > 1$, providing 
aggressive early intervention and gentler later refinement,
(3) \emph{Polynomial}: $\epsilon(x(T), t) = \epsilon_0 \cdot (t/T)^p$ 
for $p \geq 1$, which back-loads constriction when $p > 1$, offering 
slower initial relaxation and faster final convergence. All schemes 
satisfy Definition~\ref{def:constrictingCBF} by construction, and the 
shape parameter ($\lambda$ or $p$) gives practitioners direct control 
over when along the sampling trajectory the constriction pressure is 
concentrated.

\new{To preserve the fidelity of the pre-trained model, we implement the \emph{minimum-norm} control that satisfies the CBF condition~\eqref{eqn:constricting-cbf-condition}. Intervening as little as possible at each step is what lets the framework cooperate with the generative process rather than override it. Flow-based models perform coarse-to-fine refinement: the high-noise phase establishes global structure, while the low-noise phase resolves fine detail. The constricting tube mirrors this progression by design, maximally relaxed when noise is high and tightening only as the model refines local detail. Because the tube is loose in the high-noise regime, the CBF condition can be met with a small intervention early on, when the sample has not yet committed to fine structure; by the time the model resolves fine detail, the tube has already guided the trajectory close to the safe set and little to no intervention remains. The model thus retains full authority over the structure and detail that determine sample quality. This stands in contrast to projection-based methods, which apply corrections independently of the noise schedule and therefore pay the same cost at every step, disrupting global structure when applied early and overriding fine detail when applied late.}

\new{\subsection{Constricted sampling in discrete-time}\label{sec:methodology_discrete}

We now establish the rigorous safety and distribution-shift guarantees of our framework at the level of the simulated Euler--Maruyama process~\eqref{eqn:euler_maruyama}. The simulated sampler observes each noise realization $\xi_k$ before synthesizing the control $u_k$. This observe-then-act structure enables pathwise guarantees for safe sampling. The time discretization is $t_k = k\Delta t$ for $k = 0, 1, \ldots, K$ with $\Delta t = T/K$, and the guided update is,
\begin{align}
\label{eqn:guided-discrete}
x_{k-1} = x_k - [f_\theta(x_k, t_k) + u_k]\Delta t + g(t_k)\sqrt{\Delta t}\,\xi_k, \qquad \xi_k \sim \mathcal{N}(0, I),
\end{align}
where the control $u_k = u_k(x_k, \xi_k, t_k)$ is a measurable function of the observed state and noise. Let $\tilde h_k := \tilde h(x_k, t_k) = h(x_k) + \epsilon(x_K, t_k)$ denote the value of the constricting barrier at step $k$. The \emph{discrete-time constricting CBF condition} on the controlled update is
\begin{align}
\label{eqn:exact-discrete-cbf}
\tilde h(x_{k-1}, t_{k-1}) \geq (1 - \alpha\Delta t)\,\tilde h_k,
\end{align}
where $\alpha > 0$. This is the exact discrete-time analogue of the continuous-time CBF condition~\eqref{eqn:cbf-condition}: it requires the next-step barrier value to be at least a $(1 - \alpha\Delta t)$ fraction of the current value, so that $\tilde h$ can decrease no faster than at rate $\alpha$ per unit time.  

\begin{theorem}[Discrete-time reverse invariance]
\label{thm:discrete-invariance}
Suppose at every step $k = K, K-1, \ldots, 1$, the control $u_k(x_k, \xi_k, t_k)$ is chosen such that condition~\eqref{eqn:exact-discrete-cbf} holds, with $\alpha \Delta t \leq 1$. Then, $x_k \in \tilde{\mathcal{C}}(t_k)$ for every $k$, and in particular $x_0 \in \mathcal{C}$.
\end{theorem}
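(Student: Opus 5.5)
The plan is a backward induction on $k$, starting from the base case at $k=K$ and propagating nonnegativity of $\tilde h_k$ down to $k=0$ through condition~\eqref{eqn:exact-discrete-cbf}. The realized noise enters only through $x_{k-1}$. Condition~\eqref{eqn:exact-discrete-cbf} is imposed on the realized update~\eqref{eqn:guided-discrete} after $\xi_k$ is drawn. The argument is therefore purely deterministic and pathwise: we fix an arbitrary realization $(\xi_K,\dots,\xi_1)$ and reason about the resulting sequence $x_K,\dots,x_0$, with no probabilistic estimates needed.

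\emph{Base case.} At $k=K$ we have $t_K = K\Delta t = T$, so $\tilde h_K = h(x_K) + \epsilon(x_K, T)$. By the initial feasibility property in Definition~\ref{def:constrictingCBF}, $\epsilon(x_K,T) \ge \max(0,-h(x_K)) \ge -h(x_K)$. Hence $\tilde h_K \ge 0$, that is, $x_K \in \tilde{\mathcal C}(t_K)$.

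\emph{Inductive step.} Assume $\tilde h_k \ge 0$ for some $k \ge 1$. By hypothesis the control $u_k$ enforces $\tilde h(x_{k-1},t_{k-1}) \ge (1-\alpha\Delta t)\tilde h_k$. Since $\alpha\Delta t \le 1$, the factor $1-\alpha\Delta t$ is nonnegative. The product of two nonnegative numbers is nonnegative, so $\tilde h_{k-1} \ge 0$, i.e. $x_{k-1}\in\tilde{\mathcal C}(t_{k-1})$. The same bound unrolls to the explicit estimate $\tilde h_k \ge (1-\alpha\Delta t)^{K-k}\tilde h_K \ge 0$.

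\emph{Conclusion.} At $k=0$ we have $t_0=0$, and the recovery property gives $\epsilon(x_K,0)=0$. Therefore $\tilde h_0 = h(x_0) \ge 0$, and so $x_0\in\mathcal C$.

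There is no genuine analytic obstacle here. The one point needing care is the sign of the factor $1-\alpha\Delta t$: it is exactly why the hypothesis $\alpha\Delta t\le 1$ appears, since a negative factor would flip the inequality and break the induction. A second subtlety is that the theorem presupposes the existence of a control satisfying~\eqref{eqn:exact-discrete-cbf} at each step; feasibility is not claimed here and is deferred to Remark~\ref{rem:feasibility}. Monotone constriction is not needed for this invariance argument itself.
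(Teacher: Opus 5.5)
Your proposal is correct and follows essentially the same route as the paper's proof: backward induction with the base case from initial feasibility, the inductive step using $1-\alpha\Delta t \ge 0$, and the conclusion from $\epsilon(x_K,0)=0$, argued pathwise for each noise realization. Your extra observations (the unrolled bound $\tilde h_k \ge (1-\alpha\Delta t)^{K-k}\tilde h_K$, feasibility being assumed rather than proved, and monotone constriction being unnecessary) are all accurate.
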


\textbf{Proof.} The initial feasibility condition of Definition~\ref{def:constrictingCBF} gives $\tilde h_K = h(x_K) + \epsilon(x_K, T) \geq 0$, so $x_K \in \tilde{\mathcal C}(t_K)$. For the inductive step, assume $\tilde h_k \geq 0$. Condition~\eqref{eqn:exact-discrete-cbf} together with $\alpha\Delta t \leq 1$ gives
\begin{align*}
\tilde h(x_{k-1}, t_{k-1}) \geq (1 - \alpha\Delta t)\,\tilde h_k \geq 0,
\end{align*}
so $x_{k-1} \in \tilde{\mathcal C}(t_{k-1})$. By induction, $x_k \in \tilde{\mathcal C}(t_k)$ for all $k$. At $k = 0$, the target-recovery condition $\epsilon(x_K, 0) = 0$ gives $\tilde h_0 = h(x_0) \geq 0$, hence $x_0 \in \mathcal{C}$. The argument applies pathwise to every realization $\{\xi_k\}_{k=1}^K$, and the conclusion follows for every such realization. \hfill $\blacksquare$

We now characterize the distributional shift induced by guidance.

\begin{theorem}[Discrete-time distribution shift]
\label{thm:discrete-kl}
Let $p(0)$ and $p_{\mathrm{safe}}(0)$ denote the terminal marginal distributions of $x_0$ under the unconstrained sampling process~\eqref{eqn:euler_maruyama} and the guided process~\eqref{eqn:guided-discrete}, respectively. Then
\begin{align}
D_{\mathrm{KL}}\!\big(p_{\mathrm{safe}}(0)\,\|\,p(0)\big) \;\leq\; \frac{1}{2}\,\mathbb{E}\!\left[\sum_{k=1}^{K} \frac{\|u_k\|^2}{g(t_k)^2}\,\Delta t\right], \label{eqn:discrete-kl}
\end{align}
where the expectation is taken over the guided sampling path.
\end{theorem}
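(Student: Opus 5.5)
The plan is to compare the two processes at the level of full sampling paths and then pass to the terminal marginals by the data-processing inequality. Let $P$ and $Q$ denote the joint laws of $(x_K, x_{K-1},\ldots,x_0)$ under the guided update~\eqref{eqn:guided-discrete} and the unconstrained update~\eqref{eqn:euler_maruyama}, respectively. Both chains start from the same prior $p(T)$. Since $x_0$ is a measurable function of the path, $D_{\mathrm{KL}}(p_{\mathrm{safe}}(0)\,\|\,p(0)) \le D_{\mathrm{KL}}(P\,\|\,Q)$. Hence it suffices to bound the path-space divergence by the right-hand side of~\eqref{eqn:discrete-kl}.

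Next, I would apply the chain rule for KL divergence along the Markov structure. The initial marginals coincide, so the $x_K$ term vanishes. This gives
\[
D_{\mathrm{KL}}(P\,\|\,Q)=\sum_{k=1}^{K}\mathbb{E}_{P}\Big[D_{\mathrm{KL}}\big(P(x_{k-1}\mid x_k,\ldots,x_K)\,\big\|\,Q(x_{k-1}\mid x_k)\big)\Big].
\]
Under $Q$ the transition kernel is $\mathcal N\big(x_k - f_\theta(x_k,t_k)\Delta t,\; g(t_k)^2\Delta t\, I\big)$. If the control at step $k$ is fixed given the conditioning variables, then the guided kernel is the same Gaussian with its mean shifted by $-u_k\Delta t$. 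The closed-form KL between Gaussians with equal covariance then gives
\[
\frac{\|u_k\Delta t\|^2}{2g(t_k)^2\Delta t}=\frac{\|u_k\|^2\Delta t}{2g(t_k)^2}.
\]
Summing over $k$ and taking the expectation over the guided path produces exactly~\eqref{eqn:discrete-kl}. An equivalent way to see the same computation is to absorb the control into an effective noise $\eta_k = \xi_k - u_k\sqrt{\Delta t}/g(t_k)$. With this substitution, the guided update becomes the unguided map driven by the noise sequence $\eta$. Data processing then reduces the claim to bounding the divergence between the law of $\eta$ and the law of $\xi$, which is a product of standard Gaussians.

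The main obstacle is the observe-then-act structure: $u_k$ depends on the very increment $\xi_k$ it is paired with. In that case the guided transition kernel is the pushforward of $\xi_k$ under the map $\xi\mapsto x_k-(f_\theta+u_k(x_k,\xi,t_k))\Delta t+g\sqrt{\Delta t}\,\xi$, and this need not be a Gaussian mean shift. Indeed, a map that collapses $\xi$ can make the per-step KL infinite even when $\mathbb{E}\|u_k\|^2$ is finite. My first attempt would therefore be to carry out the Gaussian computation conditionally on the information available when $u_k$ is formed, treating $u_k$ as predictable. Under that interpretation each summand is exactly the Gaussian term above, and the theorem follows.

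For the genuinely noise-dependent QP control I would then try to justify the same bound through the explicit structure of the minimum-norm solution (Remark~\ref{rem:feasibility}). That solution is a shift of $\xi$ along $\nabla h$ by an amount that is monotone in $\nabla h\cdot\xi$. Writing the per-step divergence with the change-of-variables formula, the log-Jacobian contributions should be controlled by this monotonicity, leaving the mean-square shift term. If this step fails, I would state the bound under the predictable-control interpretation and note that the noise-dependent case needs this additional regularity.
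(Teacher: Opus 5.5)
Your skeleton (path-space KL, chain rule, an equal-covariance Gaussian term per step, then data processing down to the terminal marginal) is exactly the paper's, and the obstacle you isolate is real. The paper deals with it by ``conditioning on $(x_k,\xi_k)$'' and reading off a Gaussian mean shift. But once $\xi_k$ is fixed, both updates are Dirac masses. That computation is precisely your predictable-control one, applied where $u_k$ is not predictable, so the paper's proof has the gap you flagged.

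The step of your plan that fails is the repair via monotonicity and change of variables. For the minimum-norm solution of Remark~\ref{rem:feasibility}, $u_k^*\Delta t=\min(0,b)\,\nabla_x\tilde h_k/\|\nabla_x\tilde h_k\|^2$. Here $b$ is affine in $\nabla_x\tilde h_k\cdot\xi_k$ with slope exactly $g(t_k)\sqrt{\Delta t}$. Hence on the active set $\{b<0\}$ the control cancels the normal component of the noise completely:
$\nabla_x\tilde h_k\cdot x_{k-1}=\nabla_x\tilde h_k\cdot x_k-\alpha\Delta t\,\tilde h_k+\tfrac{\partial\epsilon}{\partial t}(x_K,t_k)\,\Delta t$, which is independent of $\xi_k$.

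So the map $\xi_k\mapsto x_{k-1}$ is monotone but constant in the normal direction on a half-space; it is a clipping onto the linearized boundary. Its Jacobian degenerates there, and whenever $\nabla_x\tilde h_k\neq0$ the guided kernel puts mass $\Pr(b<0\mid x_k)>0$ on a hyperplane. That kernel is therefore not absolutely continuous with respect to $\mathcal N(x_k-f_\theta\Delta t,\,g(t_k)^2\Delta t\,I)$. The per-step and path-space divergences are $+\infty$, and data processing yields nothing. Your effective-noise variant fails the same way, since $\eta_k$ then has an atom in its normal component. Because $u_k^*$ is piecewise affine and Lipschitz in $\xi_k$, this is structural, not a lack of regularity.

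In fact no argument can close this, because the inequality as stated is false for the QP controller. Take $n=K=T=1$ (so $\Delta t=1$), $f_\theta\equiv0$, $g\equiv1$, $h(x)=x$, $\alpha=1$, the linear schedule, and $x_1\sim\mathcal N(0,1)$.
\begin{itemize}
\item The linearization is exact, and the constraint reads $u_1\le x_1+\xi_1$.
\item Hence $u_1^*=\min(0,x_1+\xi_1)$ and $x_0=\max(0,x_1+\xi_1)$.
\item So $p_{\mathrm{safe}}(0)$ has an atom of mass $\tfrac12$ at $0$, while $p(0)=\mathcal N(0,2)$ has a density. The left-hand side is therefore $+\infty$.
\item The right-hand side is $\tfrac12\mathbb E[\min(0,Y)^2]=\tfrac12$ with $Y\sim\mathcal N(0,2)$.
\end{itemize}
With the paper's margin $\epsilon_0=\max(0,-h(x_K))+c$, the same happens for every $\alpha\in(0,1]$: whenever $x_1<0$ one has $\tilde h_1=c$, and the atom sits at $(1-\alpha)c$.

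So your fallback is the correct endpoint. Your Gaussian computation proves the bound when $u_k$ is fixed before $\xi_k$ is drawn, i.e.\ when it is a function of $x_k$, or of $x_K,\xi_K,\ldots,\xi_{k+1}$. In the latter case, joint convexity of KL gives the inequality when the $x$-path does not determine $u_k$. For the observe-then-act controller the theorem is actually about, the statement itself has to change; it cannot be rescued by extra regularity.
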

\textbf{Proof.} Let $P_K$ and $Q_K$ denote the joint distributions of the unconstrained and guided sampling paths $(x_K, x_{K-1}, \ldots, x_0)$, generated by~\eqref{eqn:euler_maruyama} and~\eqref{eqn:guided-discrete} respectively. We first bound the path-measure divergence $D_{\mathrm{KL}}(Q_K \| P_K)$ and then contract to the terminal marginals. Both path measures are determined by the joint law of the noise sequence $\{\xi_k\}_{k=1}^K$ and the initial state $x_K$, where $x_K \sim p(T)$ and $\xi_k \sim \mathcal{N}(0, I)$ are independent. Condition on $(x_k, \xi_k)$ at each step. Under this conditioning, $u_k = u_k(x_k, \xi_k, t_k)$ is a deterministic vector, and the controlled transition reduces to a deterministic shift of the uncontrolled transition: $x_{k-1}$ under $Q_K$ equals $x_{k-1}$ under $P_K$ minus $u_k \Delta t$. Viewing this conditionally as a comparison of two Gaussians with equal covariance $\Sigma = g(t_k)^2 \Delta t \cdot I$ and mean shift $u_k\Delta t$,
\begin{align*}
D_{\mathrm{KL}}\big(Q_k(\cdot|x_k, \xi_k) \,\|\, P_k(\cdot|x_k, \xi_k)\big) = \frac{1}{2}\,\frac{\|u_k\|^2}{g(t_k)^2}\Delta t.
\end{align*}
Equal covariance holds because the control modifies only the drift, leaving the noise schedule $g(t_k)$ unchanged; this is the structural property responsible for the $1/g(t_k)^2$ cost factor. The transitions share support, so the divergence is finite. Taking expectation over $(x_k, \xi_k)$ under $Q_K$ and summing via the chain rule for KL divergence applied to the joint path measure~\citep[Theorem 2.5.3]{TMC-JAT:2006},
\begin{align*}
D_{\mathrm{KL}}(Q_K \| P_K) = \frac{1}{2}\,\mathbb{E}_{Q_K}\!\left[\sum_{k=1}^K \frac{\|u_k\|^2}{g(t_k)^2}\Delta t\right].
\end{align*}
Terminal marginals $p(0)$ and $p_{\mathrm{safe}}(0)$ are the pushforwards of $P_K$ and $Q_K$ under the projection $(x_K, \ldots, x_0) \mapsto x_0$, so by the data-processing inequality $D_{\mathrm{KL}}(p_{\mathrm{safe}}(0) \| p(0)) \leq D_{\mathrm{KL}}(Q_K \| P_K)$, yielding~\eqref{eqn:discrete-kl}. \hfill $\blacksquare$

Subject to the discrete CBF condition~\eqref{eqn:exact-discrete-cbf}, the per-step minimizer of the integrand $\|u_k\|^2 / g(t_k)^2$ is the greedy minimizer of the bound~\eqref{eqn:discrete-kl} at each step.} While this strategy does not guarantee global optimality over the entire sampling horizon, which would require an optimal control formulation accounting for future noise realizations, it yields the tightest single-step bound on the distributional shift. The factor $1/g(t_k)^2$ in~\eqref{eqn:discrete-kl} reveals that control interventions are cheapest in distributional terms when the noise level $g(t_k)$ is large. This is precisely the structure that the constricting tube is designed to exploit. Flow-based models perform coarse-to-fine refinement: the high-noise phase establishes global structure, while the low-noise phase resolves fine details. The constricting tube mirrors this progression, maximally relaxed when noise is high and tightening only as the model refines local details. As the tube is loosest precisely when intervention is cheapest, most constraint enforcement is absorbed during the high-noise regime at minimal distributional cost. By the time the model enters the low-noise regime where fine details are resolved, the tube has already guided the trajectory close to the safe set, and $u_k^* \approx 0$. The model thus retains full authority over the structure and details that determine sample quality. When the safe set $\mathcal{C}$ overlaps significantly with $\mathrm{supp}(p(0))$, the learned drift $f_\theta$ already steers samples toward safety, requiring minimal control effort throughout. When $\mathcal{C}$ is disjoint from $\mathrm{supp}(p(0))$, the QP identifies the closest safe distribution under the greedy minimization.

\new{The bound~\eqref{eqn:discrete-kl} is stated for the stochastic case when $g > 0$. As
$g(t_k) \to 0$ the per-step factor $1/g(t_k)^2$ diverges, and the bound degenerates. This is a property of the
deterministic limit: the guided and unguided terminal laws are pushforwards
of the same initial distribution under two distinct deterministic flows, and
such pushforwards are in general mutually singular, so
$D_{\mathrm{KL}}(p_{\mathrm{safe}}(0)\,\|\,p(0))$ is ill-defined and KL divergence
is no longer the appropriate measure of distributional shift. The natural
analogue is the integrated control energy $\int_0^T \|u(x,t)\|^2\,\mathrm{d}t$,
a transport cost rather than a likelihood ratio. Minimizing $\|u_k\|^2$
pointwise, as the QP~\eqref{eqn:qp-discrete} does, then corresponds to the
minimum-$L_2$ perturbation of the learned velocity field that keeps the
trajectory within the tube: the smallest deviation from the flow-matching
drift consistent with safety. The minimum-norm objective is therefore the
right per-step criterion in both regimes. It minimizes the KL bound when
$g > 0$ and the control energy when $g \equiv 0$. This lets our
algorithm serve deterministic and stochastic samplers without modification.}

Our refinement of the CBF over the sampling horizon stands in contrast to projection-based methods, which apply corrections independently of the noise schedule and therefore pay the same distributional cost at every step, disrupting global structure when applied early and overriding fine details when applied late. The experimental consequence is visible in Figure~\ref{fig:color_results}: projection-based enforcement~(b) satisfies the constraint but destroys semantic coherence, whereas our CBF-guided sampling~(a,~c) preserves realistic scene structure by deferring to the model during the critical structure-forming phase.

\new{
\subsection{Implementation: linearized QP and algorithm}\label{sec:implementation}

Theorem~\ref{thm:discrete-invariance} establishes that any control satisfying the exact discrete CBF condition~\eqref{eqn:exact-discrete-cbf} guarantees reverse invariance of the constricting tube. However, condition~\eqref{eqn:exact-discrete-cbf} is implicit in $u_k$: the next-step barrier value $\tilde h(x_{k-1}, t_{k-1})$ depends on $u_k$ through the nonlinear function $\tilde h$ applied to the updated state $x_{k-1}$ from~\eqref{eqn:guided-discrete}. For general barrier functions $h$ this implicit constraint cannot be solved in closed form. We instead linearize the constraint in $u_k$ and synthesize the feedback controller as a convex quadratic program that enforces the linearized condition. A first-order Taylor expansion of $\tilde h$ around $(x_k, t_k)$ gives
\begin{align*}
\tilde h(x_{k-1}, t_{k-1}) \approx \tilde h_k + \nabla_x \tilde h_k \cdot (x_{k-1} - x_k) + \frac{\partial \tilde h}{\partial t}(x_k, t_k)\cdot(-\Delta t).
\end{align*}
Substituting the Euler--Maruyama update~\eqref{eqn:guided-discrete} and the identity $\partial \tilde h / \partial t = \partial \epsilon / \partial t$, the exact discrete CBF condition~\eqref{eqn:exact-discrete-cbf} becomes the inequality, affine in $u_k$,
\begin{align}
\label{eqn:discrete_cbf_linearized}
\tilde h_k + \nabla_x \tilde h_k \cdot \big(-[f_\theta(x_k, t_k) + u_k]\Delta t + g(t_k)\sqrt{\Delta t}\,\xi_k\big) - \frac{\partial \epsilon}{\partial t}(x_K, t_k)\Delta t \geq (1 - \alpha\Delta t)\,\tilde h_k.
\end{align}
The minimum-norm control synthesis is then the convex QP
\begin{align}
\label{eqn:qp-discrete}
\min_{u_k} \frac{1}{2}\|u_k\|^2 \quad \text{s.t.} \quad \eqref{eqn:discrete_cbf_linearized},
\end{align}
solved for the observed state--noise pair $(x_k, \xi_k)$ at each step $k$. This QP drives the algorithm at each sampling step. The complete procedure is given in Algorithm~\ref{algo:guided-sampling-algo}.

\begin{remark}[Feasibility and closed-form solution of the QP~\eqref{eqn:qp-discrete}]
\label{rem:feasibility}
The linearized constraint~\eqref{eqn:discrete_cbf_linearized} is a single linear inequality in $u_k$ of the form $a^\top u_k \leq b$, with $a = \nabla_x \tilde h_k\,\Delta t$ and $b$ collecting the remaining terms of~\eqref{eqn:discrete_cbf_linearized}. The QP~\eqref{eqn:qp-discrete} is therefore a minimum-norm problem with one linear constraint. When $b \geq 0$, the uncontrolled update already satisfies the linearized condition and $u_k^* = 0$. When $b < 0$, the QP admits the closed-form solution $u_k^* = (b/\|a\|^2)\,a$ whenever $a \neq 0$, which points along $-\nabla_x \tilde h_k$. Equivalently, $u_k^* = \min(0,\, b/\|a\|^2)\,a$. The regularity assumption $\nabla h(x) \neq 0$ for $x \in \partial\mathcal{C}$ ensures $a \neq 0$ near the constraint boundary where intervention is needed, and in the interior of $\tilde{\mathcal{C}}(t_k)$ where $\tilde h_k \gg 0$ the constraint is slack and $u_k^* = 0$. The QP is thus always feasible for any finite noise realization $\xi_k$. The closed-form solution makes the per-step cost negligible, and when the barrier decomposes across coordinates the QP separates into independent low-dimensional problems solved in parallel (Section~\ref{sec:expt-location}).
\end{remark}

\begin{algorithm}[ht]
\caption{CBF-guided safe sampling}
\label{algo:guided-sampling-algo}
\KwIn{Pre-trained model $f_\theta$, noise schedule $g(t)$, safe set $\C = \{x \mid h(x) \geq 0\}$, number of steps $K$, gain $\alpha > 0$}
\KwOut{Safe sample $x_0 \in \C$}

\textbf{1.} \textbf{Initialize:} Sample $x_K \sim p(T)$ (e.g., $x_K \sim \mathcal{N}(0, I)$), set $t_K = T$, $\Delta t = T/K$

\textbf{2.} \For{$k = K, K-1, \ldots, 1$}{
    \textbf{3.} Sample noise: $\xi_k \sim \mathcal{N}(0, I)$
    
    \textbf{4.} Compute noise-induced drift: $d_{\text{noise}} = g(t_k)\sqrt{\Delta t} \, \xi_k$
    
    \textbf{5.} Compute constricting barrier: $\tilde{h}_k = h(x_k) + \epsilon(x_K, t_k)$
    
    \textbf{6.} Solve safety-constrained QP:
    \begin{minipage}{0.80\linewidth}
    \vspace*{0.5pt}
    \begin{align*}
        \min_{u_k} \quad &\frac{1}{2} \|u_k\|^2 \\
        \text{s.t.} \quad &\tilde{h}_k + \nabla_x\tilde{h}_k \cdot 
        \left(-[f_\theta(x_k, t_k) + u_k]\Delta t + d_{\text{noise}}\right) 
        - \frac{\partial \epsilon}{\partial t}(x_K, t_k)\Delta t 
        \geq (1 - \alpha\Delta t)\,\tilde{h}_k
    \end{align*}
    \vspace*{0.5pt}
    \end{minipage}\\
    \textbf{7.} Apply control and update state: $x_{k-1} = x_k - f_\theta(x_k, t_k)\Delta t - u_k^*\Delta t + d_{\text{noise}}$
    
    \textbf{8.} Update time: $t_{k-1} = t_k - \Delta t$
}

\textbf{9.} \Return{$x_0$}
\end{algorithm}

The linearized constraint~\eqref{eqn:discrete_cbf_linearized} is a first-order Taylor approximation of the exact condition~\eqref{eqn:exact-discrete-cbf}, and is therefore sufficient for the exact condition only up to the second-order remainder $\tfrac{1}{2}\,\Delta x^\top \nabla_x^2 \tilde h\, \Delta x$, where $\Delta x = x_{k-1} - x_k$ is the per-step increment. The order of this remainder differs between the two regimes: in the deterministic case ($g \equiv 0$), $\Delta x$ is of order $O(\Delta t)$ and the remainder is of order $O(\Delta t^2)$. In the stochastic case ($g > 0$), $\Delta x$ is of order $O(\sqrt{\Delta t})$ so the remainder is of order $O(\Delta t)$. In either regime the linearization error shrinks with $\Delta t$ and with the curvature $\nabla_x^2 \tilde h$. In practice, it is absorbed by the per-step margin from the gain term $\alpha\,\tilde h_k\,\Delta t$ together with the conservative initialization $\epsilon_0$. We do not establish a uniform bound on the accumulated violation over the horizon, as it depends on the learned vector field $f_\theta$. A rigorous discrete-time CBF analysis with explicit step-size-dependent bounds is left as a direction for future work. Empirically, across all experiments with step sizes ranging from $\Delta t = 0.01$ (Section~\ref{sec:lorenz}) to $\Delta t = 0.005$ (Section~\ref{sec:pusht_smooth}), we observed zero constraint violations at the final sample, with the per-step linearization error reported in Table~\ref{tab:schedule-ablation}.
}

\section{Experiments}
\label{sec:experiments}
We validate our framework across three domains that demonstrate its versatility and  practical effectiveness. In each experiment, we apply our CBF-guided sampling to pre-trained, off-the-shelf generative models without any retraining or architectural  modifications, demonstrating the modularity claimed in Contribution 2. For each experiment, our goal is to demonstrate that the safe sampling guarantees of Theorem~\ref{thm:discrete-invariance} hold empirically across qualitatively distinct constraint types. 

All experiments use the linear constriction $\epsilon(x(T), t) = \epsilon_0 \cdot \frac{t}{T}$ where $ \epsilon_0 = \max(0, -h(x(T))) + c$, where $c > 0$ is a small positive margin. This margin prevents the barrier from operating exactly on its zero level set at initialization, where numerical precision could cause spurious violations. In all experiments, we set $c = 0.1$ and choose $\alpha = 0.5$. The QPs in Algorithm 1 are solved using CVXPY with the OSQP solver. Our implementation is available at [anonymized for review]. Our experiments were conducted on an Intel i9-9900 machine with 128GB RAM and an Nvidia Quadro RTX 4000 GPU.

\subsection{Physics-consistent trajectory generation for the Lorenz system}

\label{sec:lorenz}
In this experiment, we demonstrate the behavior of our guidance scheme on a synthetic testbed. We seek to generate trajectories of the Lorenz system with dynamics,
\begin{align}
\begin{aligned}
    \dot{z}_1 &= \sigma (z_2 - z_1)\\
    \dot{z}_2 &= z_1 (\rho - z_3) - z_2 \\
    \dot{z}_3 &= z_1 z_2 - \beta z_3
\end{aligned}\label{eqn:lorenz}
\end{align}
where we set $\sigma = 10, \rho=28,\beta=\frac{8}{3}$. This system exhibits a characteristic behavior called the Lorenz attractor, famously known as the butterfly effect, where it is highly sensitive to the initial condition. Our goal is to ensure that sampled trajectories satisfy the true physics encoded in \eqref{eqn:lorenz}. Given an initial condition $z(0) = [z_1(0) \ z_2(0) \ z_3(0)]^\top$, we seek to sample an entire trajectory up to 10 seconds of evolution. We use a physical time discretization $\Delta_l = 0.01$, which results in $L=1000$ sampling steps. For the sake of brevity, we denote the state sampled at time $l\Delta_l$ as $z(l\Delta_l) = z^l$. We sample a full trajectory  $x(0) = [{z^0}^\top \ {z^1}^\top \ \cdots \ {z^L}^\top]^\top$. Note that $l$ is the sampling step in physical time for the Lorenz system, whereas we reserve $k$ for the sampling time for the diffusion model.

We select the Lorenz oscillator because its ground-truth solution is available via numerical integration, providing an exact reference for validating our guidance scheme. In practice, our framework targets settings where the governing equations are high-dimensional or computationally expensive to integrate, such as turbulent flows or multi-scale PDEs, and where a generative model serves as an efficient surrogate. Unlike traditional numerical methods that accumulate integration error over time, our approach generates the entire trajectory as a single high-dimensional sample. The Lorenz system thus serves as a controlled testbed to verify that our CBF-guided sampling can recover underlying physical laws and maintain temporal consistency across the entire sequence before deployment in more complex domains.

We train a diffusion model with synthetic data generated by numerically integrating~\eqref{eqn:lorenz} from initial conditions uniformly sampled from the region $[-2, 2]^3$. The model is implemented as a conditional DDPM with a U-Net architecture featuring 4 downsampling blocks, each with 64, 128, 256, and 512 channels respectively, trained for 1000 epochs with a cosine noise schedule. As we seek adherence to the true dynamics~\eqref{eqn:lorenz}, we define the barrier function as:
\begin{align}
    h(x) &= e - \frac{1}{L}\sum_{l=0}^{L-1} \left|\left|\frac{z^{l+1} - z^l}{\Delta_l} -  \dot{z}^l \right|\right|^2
\end{align}
where $\dot{z}^l$ is given by the true vector field~\eqref{eqn:lorenz}, and $e$ is an error tolerance in the average physics adherence. The safe set $\C = \{x \mid h(x) \geq 0\}$ enforces the physics up to a tolerance $e= 0.001$.

Figure~\ref{fig:lorenz_results} presents key results of our CBF-guided sampling for Lorenz system trajectory generation. Figure~\ref{fig:lorenz_results}(a) shows trajectories in the phase plane. The true ODE solution (dashed black) is closely tracked by our CBF-guided diffusion model (blue), while unconstrained sampling (red) produces trajectories that deviate significantly from the true physics. The unconstrained sample still produces the characteristic butterfly effect, but it is not the true trajectory that is followed by the system. This arises due to the fact that the diffusion model has learned to produce samples that exhibit the butterfly attractor by seeing ground truth data in training. This highlights a subtle but critical failure mode: a generative model can produce samples that are statistically indistinguishable from real data yet physically incorrect, and any downstream task that consumes such samples, whether a controller, a simulator, or a decision-making system, would operate on corrupted inputs without any indication of the violation.

Figure~\ref{fig:lorenz_results}(b) reveals the evolution of the safety tube. At $t = 1$, the noise sample $x(T) \sim \mathcal{N}(0, I)$ has large cumulative physics error, requiring relaxation $\epsilon_0 \approx 16$—over four orders of magnitude larger than our target tolerance $e = 0.001$. The blue line shows the linear constriction $\epsilon(x(T), t) = \epsilon_0 \cdot (t/T)$ to zero at $t = 0$. The constriction barrier $\tilde{h}(x,t)$ (red) remains non-negative throughout, verifying reverse invariance (Theorem~\ref{thm:discrete-invariance}). Fluctuations during $t \approx 0.8$-$1.0$ reflect the high-noise regime where $g(t)$ is large. At $t = 0$, $\tilde{h}(x, 0) \approx 0$ with $\epsilon(x(T), 0) = 0$ implies $h(x(0)) \approx 0$, confirming that our minimum-norm control maximally exploits the constraint tolerance. Figure~\ref{fig:lorenz_results}(c) shows control effort $\|u\|^2$ concentrated at sampling onset ($t \approx 0.8$-$1.0$), peaking around 130-140, then rapidly decaying to near-zero by $t \approx 0.6$. This front-loaded intervention exploits that control is cheaper when $g(t)$ is large. The minimal effort for $t < 0.6$ indicates that the pre-trained model has implicitly learned the Lorenz system structure from training data, requiring only minor corrections to enforce hard guarantees.

\begin{figure*}[htbp]
    \centering
    \begin{tikzpicture}
    \node at (-5,0) {\includegraphics[width=0.32\textwidth]{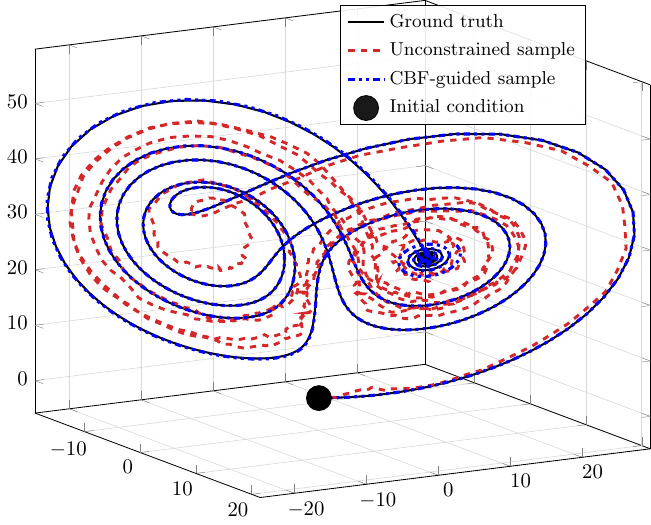}};
    \node[rotate = 90] at (-7.85,0) {$z_3$};
    \node at (-6.8,-1.9) {$z_1$};
    \node at (-3.85,-2.1) {$z_2$};
    \node at (-5,-2.3) {(a)};
    
    \node at (0.6,0) {\includegraphics[width=0.28\textwidth]{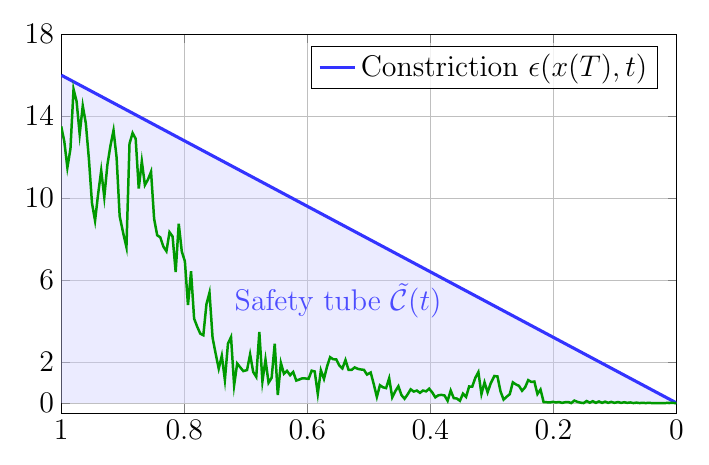}};
    \node[rotate = 90] at (-1.9,0) {Barrier $\tilde{h}(x,t)$};
    \node at (0.45,-1.6) {Sampling time $t \rightarrow$};
    \node at (0.45,-2.3) {(b)};
    
    \node at (5.7,0) {\includegraphics[width=0.28\textwidth]{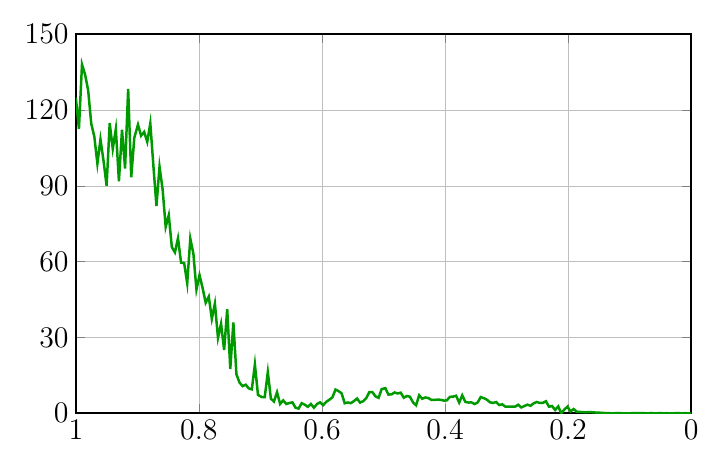}};
    \node[rotate = 90] at (3.2,0) {Control $\|u\|^2$};
    \node at (5.7,-1.6) {Sampling time $t \rightarrow$};
    \node at (5.7,-2.3) {(c)};
    
    \end{tikzpicture}
    \caption{Lorenz system trajectory generation with CBF guidance. (a) Phase portrait showing the true ODE solution (dashed black). Our CBF-guided diffusion model (blue) closely tracks the true dynamics, while unconstrained sampling (red) produces physically inconsistent trajectories. (b) Evolution of the constricting barrier $\tilde{h}(x,t)$ (green) and relaxation $\epsilon(x(T),t)$ (blue) during sampling. The constriction goes from $\epsilon_0 \approx 16$ to 0, accommodating the large initial physics violation. The barrier remains non-negative throughout, verifying reverse invariance. (c) Control effort $\|u\|^2$ over sampling time, concentrated at the onset of sampling when the initial noise sample must be corrected, then rapidly diminishing as the trajectory becomes physics-consistent.}
    \label{fig:lorenz_results}
\end{figure*}

\subsection{Constrained image generation}
\label{sec:safe_image}
We demonstrate our framework on image synthesis tasks using the off-the-shelf DDPM-bedroom-256~\footnote{\url{https://huggingface.co/google/ddpm-bedroom-256}} model from Hugging Face Diffusers, trained on the LSUN bedroom dataset~\citep{FY-etal:2015}. Images are represented as $x \in \mathbb{R}^{256 \times 256 \times 3}$ with RGB values normalized to $[-1, 1]$.

A key feature of our framework in this setting is that we define one barrier function per constrained pixel, rather than a single aggregate barrier over all pixels. For each pixel $\mathbf{p}$ in the constrained region, we define an independent barrier $h_\mathbf{p}(x)$ and enforce $h_\mathbf{p}(x) \geq 0$ separately. The resulting QP at each sampling step contains $|R|$ linear constraints, where $R$ is the set of constrained pixels. Crucially, since each per-pixel barrier $h_\mathbf{p}$ depends only on the three RGB values $x_\mathbf{p} \in \mathbb{R}^3$ at pixel $\mathbf{p}$. This sparsity implies that the multi-constraint QP over the full image space $\mathbb{R}^{256 \times 256 \times 3}$ decomposes into $|\mathcal{R}|$ independent three-dimensional QPs, one per pixel. Each sub-problem admits the closed-form solution from Remark~\ref{rem:feasibility}, enabling efficient parallel computation. In our experiments, we solve the full QP using OSQP via CVXPY, which exploits this sparsity internally. 

\subsubsection{Location and content constraints}
\label{sec:expt-location}
In this experiment, we enforce specific visual content at designated pixel locations, while retaining semantic meaning with the rest of the image. Given any reference image, we constrain a rectangular region $R \subseteq \mathbb{N}^2$ of dimension $h \times w$ in the generated image $x \in \mathbb{R}^{256 \times 256 \times 3}$ to match the reference.

To ensure the reference image appears in the prescribed location, we enforce pixel-level constraints inside the region $R$. Further, we modulate the constraint strength near boundary of $R$ so that the diffusion model can smoothly fill the edges of the image with semantic information. We define a spatially-varying mask function $v : \mathbb{N}^2 \to [0,1]$ for each pixel $\mathbf{p} \in \mathbb{R}^{256 \times 256 \times 3}$:
\begin{equation*}
v(\mathbf{p}) = \begin{cases}
1 & \text{if } \mathbf{p} \in \text{interior}(R) \\
\text{smooth decay to } 0 & \text{if } \mathbf{p} \in \text{boundary}(R) \\
0 & \text{if } \mathbf{p} \notin R
\end{cases}
\end{equation*}
where the boundary region is defined as pixels within 5\% of the edge of $R$, and the decay is implemented via a linear ramp. Let $x^*_\mathbf{p} \in \mathbb{R}^3$ denote the RGB values of the reference image $\mathbf{P}$ at the corresponding position within $R$. For each pixel $\mathbf{p} \in R$, we define the barrier function:
\begin{align}
h_\mathbf{p}(x) = e - v(\mathbf{p}) \cdot \|x_\mathbf{p} - x^*_\mathbf{p}\|^2,
    \label{eqn:spatial_barrier}
\end{align}
where $e = 0.005$ is the error tolerance and $x_\mathbf{p} \in \mathbb{R}^3$ denotes the RGB pixel values of $x$ at location $\mathbf{p}$. The safe set is the intersection $\C = \bigcap_{\mathbf{p} \in R} \{x \mid h_\mathbf{p}(x) \geq 0\}$, which enforces pixel-level fidelity in the interior of $R$ where $v(\mathbf{p}) = 1$, while allowing deviations near boundaries where $v(\mathbf{p}) < 1$. The smooth decay $v(\mathbf{p}) \to 0$ allows the diffusion model to alter the edges of the reference image $\mathbf{P}$ as required for natural blending. Each pixel has its own constricting barrier $\tilde{h}_\mathbf{p}(x, t) = h_\mathbf{p}(x) + \epsilon_\mathbf{p}(x(T), t)$ with a linear relaxation $\epsilon_\mathbf{p}(x(T), t) = \epsilon_{0,\mathbf{p}} \cdot t/T$, where $\epsilon_{0,\mathbf{p}} = \max(0, -h_\mathbf{p}(x(T))) + c$, where $c=0.01$ is a small positive margin.

Figure~\ref{fig:window_results} demonstrates constrained image generation with a $50 \times 70$ pixel window region placed at position $(40, 150)$. Both generated images (b-c) preserve the reference window (a) exactly, validating Theorem~\ref{thm:discrete-invariance}. With 200 sampling steps (c), the model generates a coherent bedroom scene with semantically appropriate context—bed, lamps, and furniture properly scaled and lit relative to the window. The spatially-varying mask enables smooth blending at boundaries without visible artifacts. With only 50 sampling steps (b), visual quality degrades in unconstrained regions, yet the window remains perfectly preserved. This demonstrates that the CBF shield guarantees constraint satisfaction regardless of sampling duration, while unconstrained regions follow standard diffusion model behavior. This arises due to how the constriction $\epsilon$ is constructed with conditions in Definition~\ref{def:constrictingCBF}. 
\begin{figure*}[htbp]
    \centering
    \begin{tikzpicture}
    \node at (-4.5,0) {\includegraphics[width=0.29\textwidth]{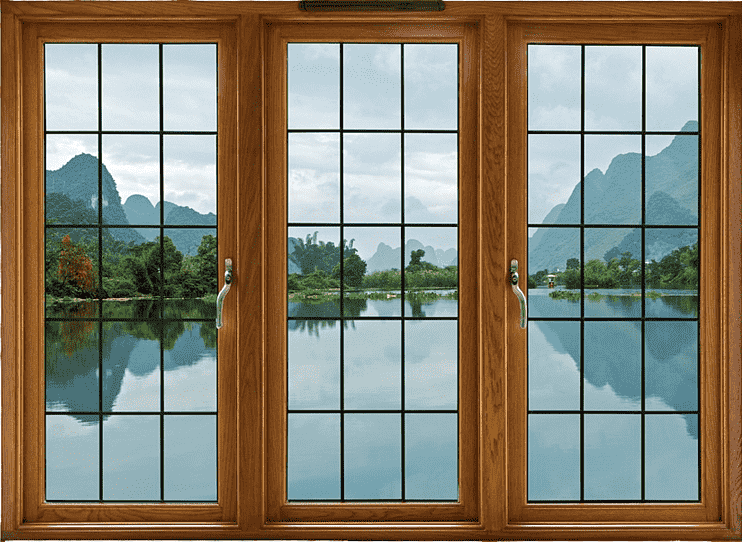}};
    \node at (-4.5,-2.1) {(a)};
    \node at (0,0) {\includegraphics[width=0.21\textwidth]{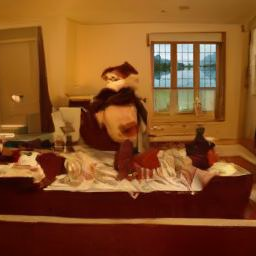}};
    \node at (0,-2.1) {(b)};
    \node at (3.8,0) {\includegraphics[width=0.21\textwidth]{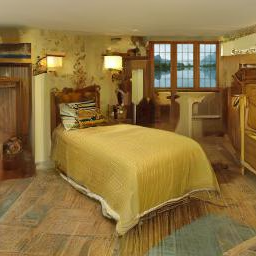}};
    \node at (3.8,-2.1) {(c)};
    \end{tikzpicture}
    
    \caption{Spatially constrained image generation with DDPM-bedroom-256. (a) Reference window patch ($50 \times 70$ pixels). (b) Generated image with 50 sampling steps. (c) Generated image with 200 sampling steps. Both (b) and (c) preserve the reference window exactly at the specified location while generating bedroom context. Higher sampling steps improve generation quality in unconstrained regions, but constraint satisfaction is guaranteed in both cases.}
    \label{fig:window_results}
\end{figure*}

This experiment demonstrates the modularity of our framework. We apply our method to an off-the-shelf pre-trained model without any architectural modifications or retraining, enforcing hard spatial constraints that would be impossible to guarantee with soft guidance methods. Additional results with different reference images and placements are provided in Appendix~\ref{appendix:expts-pillow}.

\subsubsection{Regional color intensity constraints}
We demonstrate enforcement of regional color constraints by constraining the lower one-third of the image to maintain specified color intensities. Define the constrained region as:
\begin{align*}
R_{\text{lower}} = \{(i,j) \in \mathbb{N}^2 : 171 \leq i \leq 256,\; 1 \leq j \leq 256\},
\end{align*}
which corresponds to the lower one-third of the image. The target color intensity is $x^*_\mathbf{p} \in [-1,1]^3$ for all $\mathbf{p} \in R_{\text{lower}}$. To allow natural blending near the boundary, we employ a spatially varying mask function $v : \mathbb{N}^2 \to [0,1]$ that modulates constraint strength:
\begin{align}
    v(\mathbf{p}) = \begin{cases}
    \frac{i - i_{\min}}{i_{\max} - i_{\min}} \cdot v_{\max} + \left(1 - \frac{i - i_{\min}}{i_{\max} - i_{\min}}\right) \cdot v_{\min}, & \text{if } \mathbf{p} = (i,j) \in R_{\text{lower}}, \\
    0, & \text{otherwise},
    \end{cases}
    \label{eqn:color_mask}
\end{align}
where $i_{\min} = 171$, $i_{\max} = 256$ denote the row boundaries of $R_{\text{lower}}$, and $(v_{\min}, v_{\max}) \in [0,1]^2$ the constraint strength, respectively. For each pixel $\mathbf{p} \in R_{\text{lower}}$, the barrier function is:
\begin{align}
h_\mathbf{p}(x) = e - v(\mathbf{p}) \cdot \|x_\mathbf{p} - x^*_\mathbf{p}\|^2,
    \label{eqn:color_barrier}
\end{align}
where $e =0.05$ is the error tolerance, and $x_\mathbf{p} \in \mathbb{R}^3$ denotes the RGB pixel values of $x$ at  pixel $\mathbf{p}$. The safe set $C = \{x \in \mathbb{R}^{256 \times 256 \times 3} \mid h(x) \geq 0\}$ represents images with pixel-level color fidelity weighted by the mask function $v(\mathbf{p})$, where higher mask values enforce stricter adherence to the target intensity. We use the linear constriction scheme 
$\epsilon(x(T), t) = \epsilon_0 \cdot t/T + c$ where $\epsilon_0 = \max(0, -h(x(T)))$.

In this experiment, we use different mask configurations and color intensities and compare our constricting CBF guidance scheme with projection-based constraint enforcement in~\cite{SZ-etal:2025}. This experiment illustrates how our coarse-to-fine constraint enforcement retains semantic meaning with the rest of the image whereas projection schemes can lose semantic meaning:
\begin{enumerate}
\item \emph{Moderate constraint} in Figure~\ref{fig:color_results}(a): The target pixel color is black, $x^*_\mathbf{p} = (-0.9, -0.9, -0.9)$, and the spatial mask $v(\mathbf{p})$ varies from $v_{\max} = 0.5$ at the bottom to $v_{\min} = 0$ at the upper boundary of $R_{\text{lower}}$. This allows the diffusion model freedom to generate detailed, realistic textures while satisfying the per-pixel color constraints. 

\item \emph{Projection} in Figure \ref{fig:color_results}(b): We implement the projection-based constraint enforcement scheme in \cite{SZ-etal:2025} and compare the visual quality. The constraint is the same as in the earlier case where the bottom-third of the image needs to be black $x^*_\mathbf{p} = (-0.9, -0.9, -0.9)$, and the spatial mask $v(\mathbf{p})$ varies from $v_{\max} = 0.5$ at the bottom to $v_{\min} = 0$ at the upper boundary of $R_{\text{lower}}$. Due to projection at each step, we get a black-tape effect, which achieves constraint enforcement but semantic meaning is lost.

\item \emph{Weak constraint} in Figure \ref{fig:color_results}(c): The target pixel intensity is $x_\mathbf{p} = (-0.4, -0.3, -0.3)$, corresponding to a light brown color. The mask values range from $(v_{\min}, v_{\max}) = (0, 0.2)$, applying weak constraints. This allows the diffusion model to almost freely generate an image, relying on the guidance to only nudge individual pixels toward the target color.
\end{enumerate}

All images successfully satisfy the color constraint in the lower region while maintaining semantic coherence with realistic bedroom layouts. However, the choice of mask strength directly affects the balance between strict constraint adherence and natural visual appearance, with lower mask values allowing greater model freedom to generate detailed, realistic textures within the constrained color range. Figure~\ref{fig:color_results} illustrates how our CBF-guided sampling framework enables precise control over the strength and extent of constraint enforcement. The spatially-varying mask function $v(p)$ provides a smooth control between strict enforcement and model freedom, allowing practitioners to balance safety enforcement with perceptual quality based on application requirements. Importantly, all three configurations achieve 100\% constraint satisfaction. No generated image violates $h(x) \geq 0$, demonstrating the formal safety guarantees of Theorem~\ref{thm:discrete-invariance} even under weak constraint settings.

\begin{figure*}[htbp]
    \centering
    \begin{tikzpicture}
    \node at (-5,0) {\includegraphics[width=0.23\textwidth]{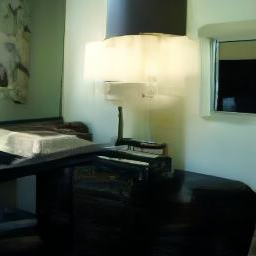}};
    \node at (-5,-2.4) {(a)};
    \node at (0,0) {\includegraphics[width=0.23\textwidth]{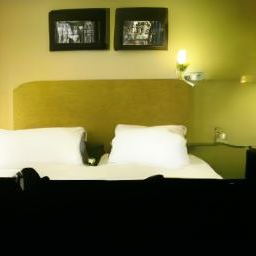}};
    \node at (0,-2.4) {(b)};
    \node at (5,0) {\includegraphics[width=0.23\textwidth]{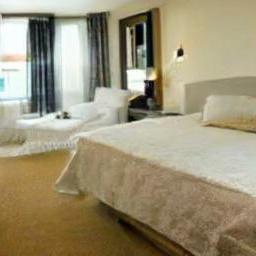}};
    \node at (5,-2.4) {(c)};
    \end{tikzpicture}
    \caption{Regional color intensity constraints. Images are enforced with color values in the lower third. (a) \emph{Moderate intensity:} The target pixel color is black, and the spatial mask $v(\mathbf{p})$ goes from $0.5 \to 0$, allowing the diffusion model to produce clean furniture. (b) \emph{Projection:} Projection-based constraint enforcement~\citep{SZ-etal:2025}, constraints are the same as in (a), with the target pixel color black in the bottom one-third of the image. Projection causes intense constraint enforcement, which leads to loss of semantic information. (c) \emph{Low intensity:} Target color is brown, and the spatial mask $v(\mathbf{p})$ goes from $0.2$ at the bottom to $0$ at the lower-third boundary, producing a semantically meaningful image of a room with a brown carpet.}
    \label{fig:color_results}
\end{figure*}

\new{
\subsubsection{Quantitative study and ablation: choice of constriction schedule}
\label{sec:schedule-ablation}

We now study how the choice of constriction schedule $\epsilon(x(T), t)$ affects sampling. As shown in Section~\ref{sec:cbf-flow}, Definition~\ref{def:constrictingCBF} admits a broad family of schedules, and we proposed three concrete candidates: \emph{linear}, \emph{exponential}, and \emph{polynomial}. The exponential schedule with $\lambda > 1$ front-loads constriction into the high-noise regime, the polynomial schedule with $p > 1$ back-loads it, and the linear schedule applies uniform pressure throughout. To validate this analytical prediction, we run the location and content constraint of Section~\ref{sec:expt-location} under four schedules with $K = 200$ sampling steps and identical safety margin $c = 0.01$. We compare:
\begin{itemize}
    \item \textbf{Linear}: $\epsilon(x(T), t) = \epsilon_0 \cdot t/T$.
    \item \textbf{Exponential, $\lambda = 1.5$}: $\epsilon(x(T), t) = \epsilon_0 \cdot (e^{\lambda t/T} - 1)/(e^\lambda - 1)$, mild front-loading.
    \item \textbf{Exponential, $\lambda = 3$}: same form, aggressive front-loading.
    \item \textbf{Polynomial, $p = 3$}: $\epsilon(x(T), t) = \epsilon_0 \cdot (t/T)^p$, back-loading.
\end{itemize}

We measure two complementary aspects of each schedule. To characterize per-step control behavior, we run sampling with a single fixed initial noise $x(T)$ and log the maximum per-step linearization error $\max_k |\tilde h(x_{k-1}, t_{k-1}) - \tilde h_{\text{linear},k}|$, where $\tilde h_{\text{linear},k}$ is the prediction implied by the linearized constraint at step $k$. To assess quality and diversity of the generated distribution, we generate $500$ independent samples per schedule (varying random seed) and compare against $500$ samples from the unguided model using Fréchet Inception Distance (FID), Kernel Inception Distance (KID), and Vendi score~\citep{DF-ABD:2023}. As the constraint intentionally alters the distribution within the window patch, we compute FID and Vendi on the \emph{masked} variant of each image, in which the constrained region is replaced with a uniform fill. This isolates whether guidance preserves the model's expressiveness in regions where the constraint is inactive. Aggregate results are reported in Table~\ref{tab:schedule-ablation} and Figure~\ref{fig:schedule-ablation}.

\begin{figure}[h]
    \centering
    \begin{subfigure}{0.16\textwidth}
    \includegraphics[width=\textwidth]{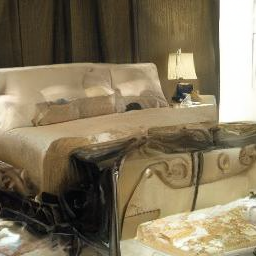}
    \caption{Unconstrained}
  \end{subfigure}\hfill
  \begin{subfigure}{0.16\textwidth}
    \includegraphics[width=\textwidth]{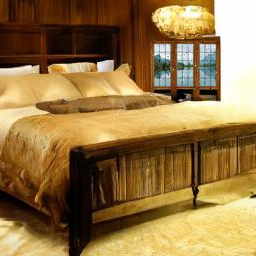}
    \caption{Linear}
  \end{subfigure}\hfill
    \begin{subfigure}{0.16\textwidth}
    \includegraphics[width=\textwidth]{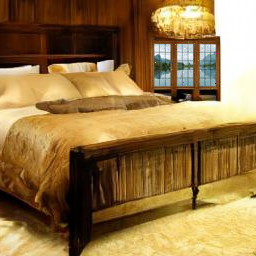}
    \caption{Exp, $\lambda=1.5$}
  \end{subfigure}\hfill
    \begin{subfigure}{0.16\textwidth}
    \includegraphics[width=\textwidth]{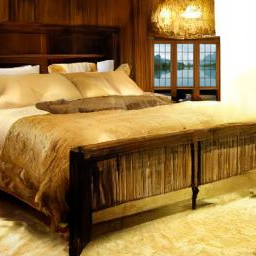}
    \caption{Exp, $\lambda=3$}
  \end{subfigure}\hfill
    \begin{subfigure}{0.16\textwidth}
    \includegraphics[width=\textwidth]{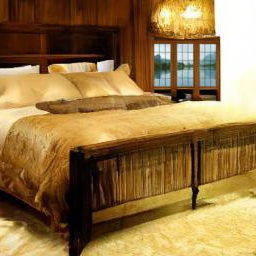}
    \caption{Poly, $p=3$}
  \end{subfigure}
   \caption{\new{Final samples comparing sampling with an unconstrained model versus four constriction schedules on the window-location constraint of Section~\ref{sec:expt-location} under identical initial noise. All four schedules preserve the reference window patch. The schedules differ in where along the sampling horizon the control effort is concentrated. See Table~\ref{tab:schedule-ablation} for quantitative comparison.}}
    \label{fig:schedule-ablation}
\end{figure}

\begin{table}[h]
    \centering
    \caption{\new{Quantitative comparison of constriction schedules on the location and content constraint, $K = 200$ sampling steps. $\max_k\text{lin.err}_k$ is the largest per-step deviation between the realized barrier and its linear prediction (lower is better, characterizes discrete-time error). FID$_{\text{mask}}$ and KID$_{\text{mask}}$ are computed between $500$ schedule samples and $500$ vanilla samples on the unconstrained region only (lower indicates closer to vanilla). Vendi$_{\text{mask}}$ is intra-set diversity on the unconstrained region (higher is more diverse, vanilla reference is $7.35$). QP time is the average per-step computation time for the QP.}}
    \label{tab:schedule-ablation}
    \begin{tabular}{lcccccc}
        \toprule
        Schedule & $\max_k\text{lin.err}_k$ & FID$_{\text{mask}}$ & KID$_{\text{mask}}\,(\times 10^{-2})$ & Vendi$_{\text{mask}}$ & QP time (ms) \\
        \midrule
        Vanilla (no CBF)             & --  & --     & --              & $7.35$ & -- \\
        Linear                       & $7.42$  & $47.09$ & $1.01\,\pm\,0.19$ & $6.62$ & $33.45$ \\
        Exponential, $\lambda=1.5$   & $10.04$ & $47.22$ & $1.03\,\pm\,0.19$ & $6.64$ & $32.16$ \\
        Exponential, $\lambda=3$     & $\mathbf{5.81}$  & $47.22$ & $1.04\,\pm\,0.19$ & $\mathbf{6.66}$ & $33.62$ \\
        Polynomial, $p=3$            & $22.69$ & $47.20$ & $1.04\,\pm\,0.19$ & $6.65$ & $33.21$ \\
        \bottomrule
    \end{tabular}
\end{table}

Three observations emerge from Table~\ref{tab:schedule-ablation} and Figure~\ref{fig:schedule-ablation}.

\emph{(i) The aggressive exponential schedule minimizes linearization error.} The maximum per-step linearization error for $\lambda = 3$ ($5.81$) is the smallest among all schedules. As the linearization~\eqref{eqn:discrete_cbf_linearized} is a first-order Taylor expansion of $\tilde h$ around the current state, its accuracy depends on the per-step change in $\tilde h$, which is dominated by the constriction term $\partial \epsilon/\partial t \cdot \Delta t$ when the QP is active. By concentrating $\partial \epsilon / \partial t$ in the high-noise regime where the underlying stochastic dynamics already dominate the per-step state change, the front-loaded schedule keeps the constriction-induced linearization residual small throughout sampling. This is consistent with the prediction of Theorem~\ref{thm:discrete-kl}: the factor $1/g(t)^2$ in (\ref{eqn:discrete-kl}) makes intervention distributionally cheap precisely where the noise level $g(t)$ is large, which is also where the Taylor expansion is least strained. The mild exponential ($\lambda = 1.5$) does not concentrate constriction aggressively enough to gain this advantage and lands between linear and aggressive exponential on every metric.

\emph{(ii) The back-loaded polynomial schedule incurs higher linearization error.} The polynomial schedule with $p=3$ exhibits a maximum linearization error of $22.69$, nearly four times that of $\lambda = 3$. This is consistent with the inverse perspective on Theorem~\ref{thm:discrete-kl}: by concentrating $\partial \epsilon/\partial t$ in the low-noise regime where the stochastic dynamics are quiescent, the back-loaded schedule forces the QP to make abrupt corrections that strain the first-order approximation. Although this does not prevent successful sampling, it suggests that back-loaded schedules carry a numerical fragility that front-loaded schedules avoid.

\emph{(iii) Generation quality on the unconstrained region is preserved and is essentially independent of schedule.} FID$_{\text{mask}}$ varies only between $47.09$ and $47.22$ across all four schedules, differences well within the KID standard error of $\pm 0.19 \times 10^{-2}$. Vendi$_{\text{mask}}$ similarly varies only between $6.62$ and $6.66$, against a vanilla reference of $7.35$, indicating that all CBF schedules preserve approximately $90\%$ of the unguided model's diversity in the unconstrained region. The remaining $10\%$ reduction is attributable to the fact that all $500$ samples share the same forced window content, biasing the model toward a narrower distribution of compatible scenes rather than reflecting damage to its generative capability. This is a useful negative result for practitioners: the choice of schedule can be made on control-theoretic grounds (linearization stability) without trading off generation quality.

Across all CBF schedules, the per-sample QP overhead is approximately $33$~ms, roughly $13\%$ relative to vanilla sampling. The remaining inference time is dominated by U-Net forward passes, which are unaffected by the choice of constriction schedule.  The front-loaded exponential schedule with $\lambda \geq 3$ is a robust default: it minimizes maximum linearization error while preserving unconstrained-region quality on par with the alternatives. Linear and back-loaded schedules remain valid and satisfy the safety guarantee of Theorem~\ref{thm:discrete-invariance}, but the discrete-time implementation will be most robust when constriction is concentrated where the model itself has not yet committed to fine structure.

}
\subsection{Smooth robot policy generation}
\label{sec:pusht_smooth}
We demonstrate our framework on robotic manipulation using the pre-trained Diffusion Policy model~\citep{CC-etal:2025} for the Push-T task. The Push-T task requires a planar robotic arm to push a T-shaped block to a target pose, as depicted in Figure~\ref{fig:pusht_smooth}(a). The diffusion policy generates a sequence of waypoints $x(0) = \{a_0, a_1, \ldots, a_{S}\}$, where $S = 15$ is the action horizon and each waypoint $a_s \in \mathbb{R}^2$ represents a position that is tracked by a low-level controller. We seek safe samples of action chunks $x(0)$ by applying Algorithm~\ref{algo:guided-sampling-algo} during the diffusion sampling process, without any retraining or modification to the model architecture. On physical robotic hardware, abrupt changes in waypoints pose concrete safety risks. Large accelerations violate actuator rate limits, inducing torque spikes that can damage motors and gearboxes. In contact-rich tasks such as Push-T, jerky motions destabilize the grasp or push contact, leading to task failure or uncontrolled object motion. Smoothness of the commanded action sequence is therefore not merely a quality metric but an operational safety requirement for real-world deployment. We encode this requirement as a hard constraint by bounding the average curvature variation across the action horizon. Specifically, we define the barrier function as:
\new{\begin{align}
h(x) = e_\text{smooth} - \frac{1}{S}\sum_{s=1}^{S-1} \|a_{s+1} - 2a_s + a_{s-1}\|^2,
\end{align}}
where $e_{\text{smooth}} = 1.5$ is the tolerance that bounds the maximum allowed \new{average curvature}. The safe set $\C = \{x \mid h(x) \geq 0\}$ encodes the set of all action sequences whose average curvature remains below the prescribed threshold $e_\text{smooth}$. We apply our Algorithm~\ref{algo:guided-sampling-algo} with the same linear constriction scheme as in the previous experiments.
\begin{figure*}[htbp]
    \centering
    \begin{tikzpicture}
    \node at (-5,0) {\includegraphics[width=0.24\textwidth]{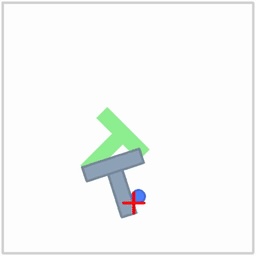}};
    \node at (-5,-2.4) {(a)};
    \node at (0,0) {\includegraphics[width=0.24\textwidth]{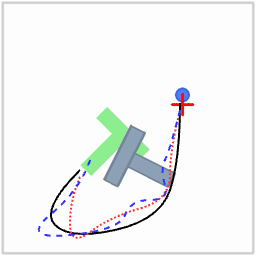}};
    \node at (0,-2.4) {(b)};
    \draw[red,dotted,line width = 1.2pt] (2.5,1) -- (3.2,1);
    \draw[blue,dashed,line width = 1.5pt] (2.5,0) -- (3.2,0);
    \draw[line width = 1.5pt] (2.5,-1) -- (3.2,-1);
    \node at (5,1) {DP~\citep{CC-etal:2025}};
    \node at (4.18,0) {DP-DDIM};
    \node at (3.73,-1) {Ours};
    \end{tikzpicture}
    \caption{Smooth action generation for Push-T robotic manipulation. (a) The Push-T environment: a planar robot arm must push the T-shaped block (gray) to the target pose (green). (b) End-effector trajectories for a representative episode. Unconstrained Diffusion Policy (DP, dotted red) exhibits sharp directional changes during pushing. DP-DDIM (dashed blue) produces more erratic motion due to fewer sampling steps. Our CBF-guided sampling (solid black) generates a smooth trajectory that satisfies the smoothness constraint at every step while achieving the same task reward.}
    \label{fig:pusht_smooth}
\end{figure*}

Table~\ref{tab:pusht_smoothness} presents quantitative results averaged over 100 episodes. We compare our CBF-guided sampling against the original Diffusion Policy with DDPM sampling (DP) and DDIM sampling (DP-DDIM), using the pre-trained checkpoints from the authors' repository\footnote{\url{https://github.com/real-stanford/diffusion\_policy}}. Our CBF-guided sampling achieves a mean reward of 0.92, matching the original Diffusion Policy, while guaranteeing zero smoothness violations across all episodes. In contrast, unconstrained DP and DP-DDIM exceed the smoothness threshold $\epsilon_{\text{smooth}}$ an average of 14 and 19 times per episode, respectively. This demonstrates that the learned policy does not inherently satisfy smoothness requirements despite being trained on smooth demonstrations. The diffusion sampling process itself introduces high-frequency artifacts that violate the constraint. Figure~\ref{fig:pusht_smooth}(b) visualizes the end-effector trajectory for a representative episode. The unconstrained DP trajectory (dotted red) exhibits sharp directional changes, particularly during the pushing phase. DP-DDIM (dashed blue), which uses fewer sampling steps, produces even more erratic motion. Our CBF-guided trajectory (black) follows a visibly smoother path while achieving the same task objective. The smoothness is enforced throughout the diffusion sampling process rather than applied as post-hoc filtering, ensuring that every generated action chunk satisfies the constraint by construction. The additional computational cost of solving the QP at each sampling step is modest. Inference time increases from 47\,ms to 63\,ms per sample, a 34\% overhead that remains well within real-time requirements for the 10\,Hz control loop of the Push-T environment. DP-DDIM achieves significantly faster inference (4.6\,ms) by using only 10 sampling steps, but at the cost of higher smoothness violations and slightly lower task reward.

This experiment validates our framework in a robotic planning setting where the generative model operates over structured action sequences rather than images or physical trajectories. The constraint is enforced purely in action space, requiring no forward dynamics model or state prediction. Extending to state-space safety constraints, such as collision avoidance, would require access to a dynamics model (analytical or learned) to propagate the effect of actions to states. We discuss this as a natural extension in Section~\ref{sec:limitations}.

\begin{table}[t]
\centering
\caption{Comparison on Push-T with smoothness constraints over 100 episodes}
\begin{tabular}{ l c c c }
\toprule
\textbf{Metric} & \textbf{DP} &  \textbf{DP-DDIM} & \textbf{CBF-guided} \\
\midrule
Mean reward & $0.92$ & $0.90$ & $ 0.92$ \\
Smoothness violation & $14$ & $19$ & $\mathbf{0}$ \\
Sampling steps $K$ & 100 & 10 & 100 \\
Mean inference time (ms) & $47.05$ & $4.57$ & $62.91$ \\
\bottomrule
\end{tabular}
\label{tab:pusht_smoothness}
\end{table}

\section{Limitations and future work}
\label{sec:limitations}

Our framework assumes access to a continuously differentiable barrier function $h(x)$ that precisely encodes the safety specification. When the notion of safety is well-defined and analytically expressible, such as physics residuals (Section~\ref{sec:lorenz}), pixel-level constraints (Section~\ref{sec:safe_image}), or action smoothness bounds (Section~\ref{sec:pusht_smooth}), our method provides deterministic guarantees. However, in domains where safety is ambiguous or difficult to formalize, such as filtering semantically inappropriate content in images, constructing a suitable barrier is nontrivial. In our preliminary experiments, we attempted to use CLIP-based confidence scores and neural network classifiers as barrier functions for semantic constraints. These attempts were unsuccessful on several occasions. The classifiers occasionally assigned high confidence to unsafe samples, causing the CBF condition to be trivially satisfied and the safety filter to remain inactive. This failure mode is well-documented in the adversarial robustness literature and stems from the fact that learned classifiers extract features that do not reliably align with the safety boundary. When the barrier function itself is unreliable, obtaining formal guarantees on safety with respect to the intended safety semantics is unachievable. 

Several extensions follow naturally from our framework. First, extending the approach to latent diffusion models would enable scalability to higher-resolution generation. We investigated extending our framework to latent diffusion models~\citep{RR-AB-etal:2022} where the diffusion process operates in a compressed latent space $z = E(x)$ and the constraint is defined in image space. While the QP structure is preserved, the latent gradient is computed via a single backward pass through the decoder. Our preliminary experiments with Stable Diffusion v1.5 revealed that exact constraint satisfaction does not transfer from latent to image space (see Appendix B.2). The VAE decoder is not a diffeomorphism: it is locally non-invertible and introduces reconstruction artifacts that distort the constraint boundary. As a result, the CBF shield successfully biases the latent trajectory toward the constrained region but cannot guarantee pixel-level fidelity in the decoded image. Achieving exact constraints in latent diffusion models likely requires either (i) a constraint-aware fine-tuning of the decoder to improve local invertibility near the constraint boundary, or (ii) a hybrid approach that applies coarse guidance in latent space and a final correction step in pixel space after decoding. Second, enforcing state-space safety constraints in robotic policies, such as collision avoidance, requires a dynamics model to map actions to states. Combining our framework with learned dynamics models or neural ODEs is a promising direction. Third, our current implementation solves one QP per sampling step using a general-purpose solver. Using an MPC framework across many steps could significantly yield tighter bounds on the KL divergence over the entire sampling horizon. In particular, a receding-horizon formulation that anticipates the constriction schedule over multiple future steps could yield control policies that are globally optimal, rather than greedy at each step, potentially tightening the bound over the learned sampling process.

\section{Conclusion}
\label{sec:conclusion}

We introduced a framework for enforcing hard constraints on flow-based generative models by framing guided sampling as a problem of control synthesis. Our approach leverages constricting Control Barrier Functions (CBFs) to define a safety tube that is relaxed at the initial noise distribution and progressively constricts to the target safe set. The resulting minimum-norm QP synthesizes a feedback control input that provably maintains samples within the safety tube  (Theorem~\ref{thm:discrete-invariance}), while minimizing the instantaneous contribution to the distributional shift from the learned model, as quantified by the KL divergence (Theorem~\ref{thm:discrete-kl}). We validated the modularity and effectiveness of this framework across physically-consistent trajectory generation, constrained image synthesis, and smooth action generation for robotic manipulation, using off-the-shelf pre-trained models. Our framework maintains semantic fidelity of the model while ensuring constraint adherence. As generative models are increasingly deployed in safety-critical systems, this framework provides a principled safety layer that complements the expressiveness of generative sampling models.

\new{
\section*{Broader Impact Statement}
Our framework is motivated by safety: it provides a deterministic mechanism for
enforcing hard constraints on pre-trained generative models, which we expect to
be most useful in safety-critical settings such as robotics and scientific
simulation. The same mechanism that enforces benign constraints could in
principle be used to force generative models to satisfy harmful or deceptive
specifications, and this concern grows as the approach is extended toward
latent-space and high-resolution image and video models, where constraints
could be used to embed targeted or undesirable content. We believe the net
effect is positive, which is that hard, auditable constraints are easier to inspect and
regulate rather than opaque soft guidance.

We also stress an important limitation of the guarantee itself. Our safety
certificate is only as meaningful as the barrier function that encodes it: the
theorem guarantees membership in the set $\{x : h(x)\ge 0\}$, not that this set
captures the intended notion of safety. When the barrier is misspecified, or
when it is a learned proxy (e.g., a classifier) that does not reliably align
with the true safety boundary, the formal guarantee can hold while the intended
safety property fails, a failure mode we observed directly in our preliminary
experiments (Section~\ref{sec:limitations}). Practitioners should therefore
treat our framework as a way to enforce a \emph{specified} constraint exactly,
not as a substitute for specifying what safety means.
}



\bibliography{refs}
\bibliographystyle{tmlr}

 \appendix

\section{Appendix: Additional experiments}
\label{appendix:expts}

\subsection{Spatial content constraints with alternate reference images}
\label{appendix:expts-pillow}

Figure~\ref{fig:pillow_results} demonstrates spatial and content-constrained image generation using red decorative pillows as the reference patch. Both constrained generations (b-c) preserve the pillows exactly at the specified location. 

Notably, the generated scenes exhibit predominantly light tones. This occurs because the reference patch (a) includes white pixels near its boundaries, which the diffusion model interprets as local context. The spatially-varying mask $v(\mathbf{p})$ constrains boundary pixels to remain close to reference values, causing the model to generate ambient colors that transition naturally from the white boundary. This demonstrates that our CBF shield enforces constraints precisely as specified, including boundary characteristics. In practice, reference patches with neutral boundaries allow greater freedom in surrounding generation.

\begin{figure*}[htbp]
    \centering
    \begin{tikzpicture}
    \node at (-4.5,0) {\includegraphics[width=0.25\textwidth]{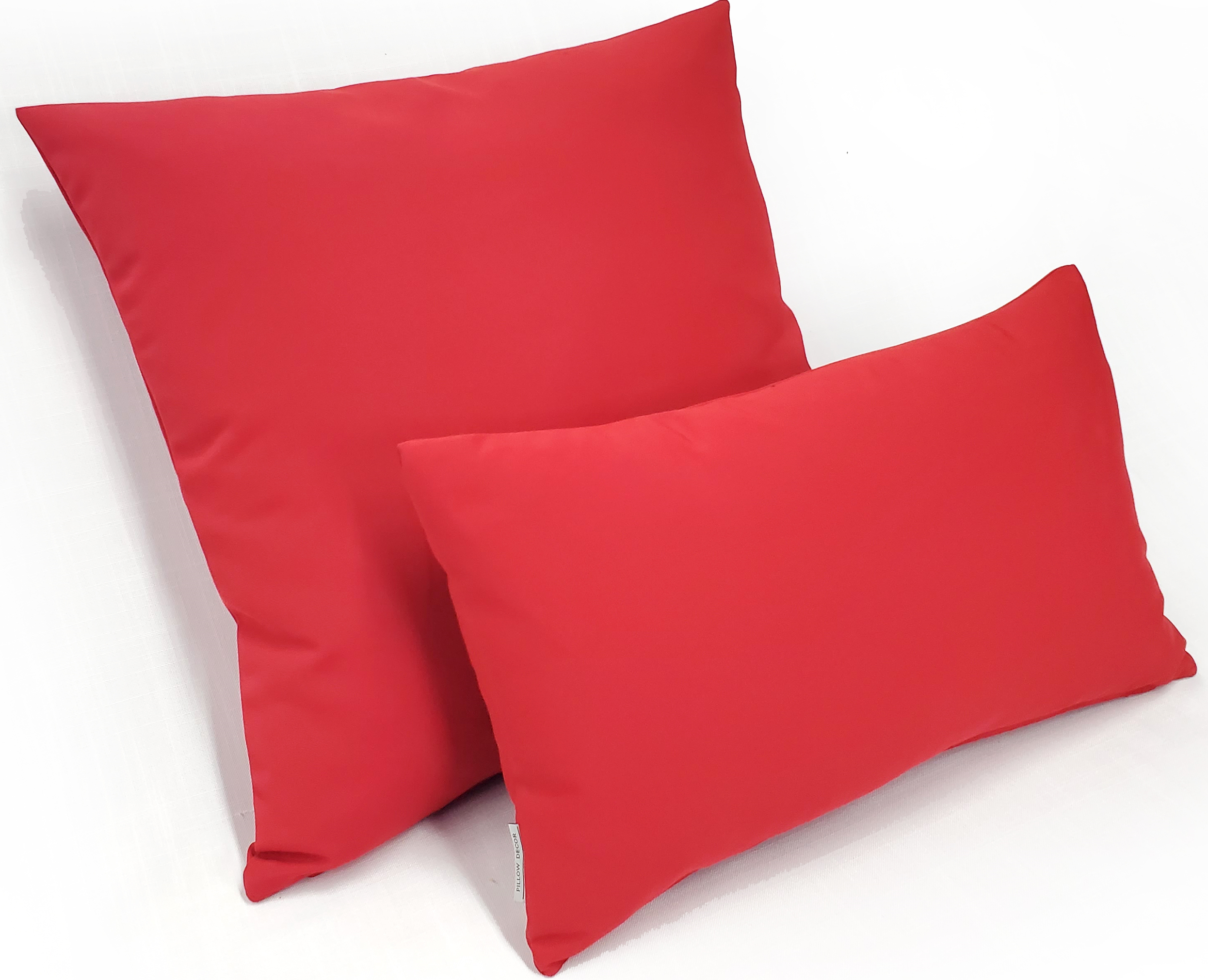}};
    \node at (-4.5,-2) {Reference};
    \node at (-0.2,0) {\includegraphics[width=0.21\textwidth]{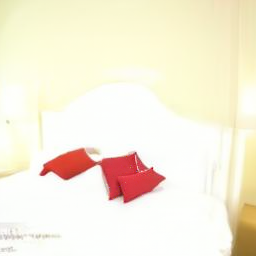}};
    \node at (-0.2,-2) {50 sampling steps};
    \node at (3.8,0) {\includegraphics[width=0.21\textwidth]{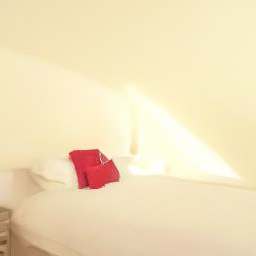}};
    \node at (3.8,-2) {200 sampling steps};
    \end{tikzpicture}
    
    \caption{Spatial and content constrained generation with red pillow reference. (Left) Reference patch ($50 \times 50$ pixels) containing two red pillows with white background. (Middle) Generated image with 50 sampling steps. (Right) Generated image with 200 sampling steps. Both preserve the pillows exactly while generating bedroom context. The white boundary pixels in the reference influence the surrounding color palette, resulting in light-toned ambient environments.}
    \label{fig:pillow_results}
\end{figure*}

\new{
\subsection{Non-convex constraints: disjunctive window selection}
\label{sec:disjunctive-window}

A central claim of our framework (Contribution~1) is that the safety guarantee of
Theorem~\ref{thm:discrete-invariance} makes no assumption on the convexity of the
safe set. To demonstrate this directly in image space, we construct a constraint
whose safe set is a \emph{union} of two basins, an explicitly non-convex set, and
show that the framework selects and renders one of them without any manual
intervention.

Given two reference window images $A, B \in \mathbb{R}^{3
\times N}$ over the constrained patch of $N$ pixels, we require the patch to match
\emph{one of} the two references to within a tolerance $e$. Writing $A_p, B_p \in
\mathbb{R}^3$ for the reference colors at pixel $p$ and $v(p) \in [0,1]$ for the
boundary mask of Section~\ref{sec:expt-location}, the safe set is
\begin{equation}
    \mathcal{C} = \Big\{ x : \tfrac{1}{N}\!\sum_p v(p)\|x_p - A_p\|^2 \le e \Big\}
              \;\cup\;
              \Big\{ x : \tfrac{1}{N}\!\sum_p v(p)\|x_p - B_p\|^2 \le e \Big\}.
\end{equation}
This is the union of two disjoint basins (one per reference window) and is therefore
non-convex: a patch interpolating between the two windows lies in neither basin, yet
both endpoints are feasible.

The barrier for a union of safe sets
is the pointwise maximum of the individual barriers, $h(x) = \max(h_A(x), h_B(x))$,
where $h_A, h_B$ are the per-reference barriers. The hard maximum is
non-differentiable on the set where $h_A = h_B$, and its gradient switches abruptly
between the two references there, which would cause the control direction to chatter
between basins across sampling steps. We therefore smooth the maximum with a
LogSumExp surrogate,
\begin{equation}
    h_\beta(x) = \tfrac{1}{\beta}\log\!\big(e^{\beta h_A(x)} + e^{\beta h_B(x)}\big),
    \label{eq:lse-barrier}
\end{equation}
whose gradient is a softmax-weighted blend of the two basin gradients, $\nabla h_\beta
= \sigma_A \nabla h_A + \sigma_B \nabla h_B$ with $\sigma_A = e^{\beta h_A}/(e^{\beta
h_A}+e^{\beta h_B})$. The surrogate is a $C^\infty$ 
approximation of the max function.


The control is synthesized per pixel against the selected target $T_p = \sigma_A
A_p + \sigma_B B_p$, exactly as in Section~\ref{sec:expt-location}. Because the two
references occupy well-separated basins, $\sigma$ commits to one window
($\sigma \to 1$) within the first few sampling steps and remains committed, so $T_p$
is effectively the chosen reference. This design retains the per-pixel enforcement
strength that renders a sharp window while keeping the non-convex two-basin choice at
the aggregate level. We emphasize that the per-step QP remains convex: the maximum is
resolved by evaluating $(\sigma_A, \sigma_B)$ at the current state, and the QP only
ever sees a linear constraint.

Figure~\ref{fig:disjunctive-window} shows generations under the
disjunctive constraint at a fixed patch location, across several seeds, using
the off-the-shelf DDPM-bedroom-256 model. The framework selects a window basin based
solely on the initial noise and renders the selected
window to within tolerance in every case. Across seeds, both windows are selected, confirming that
both basins of the non-convex set are reachable and that the selection is genuinely
determined by the sampling trajectory rather than fixed in advance. 

\begin{figure}[h]
    \centering
  \includegraphics[width=0.23\textwidth]{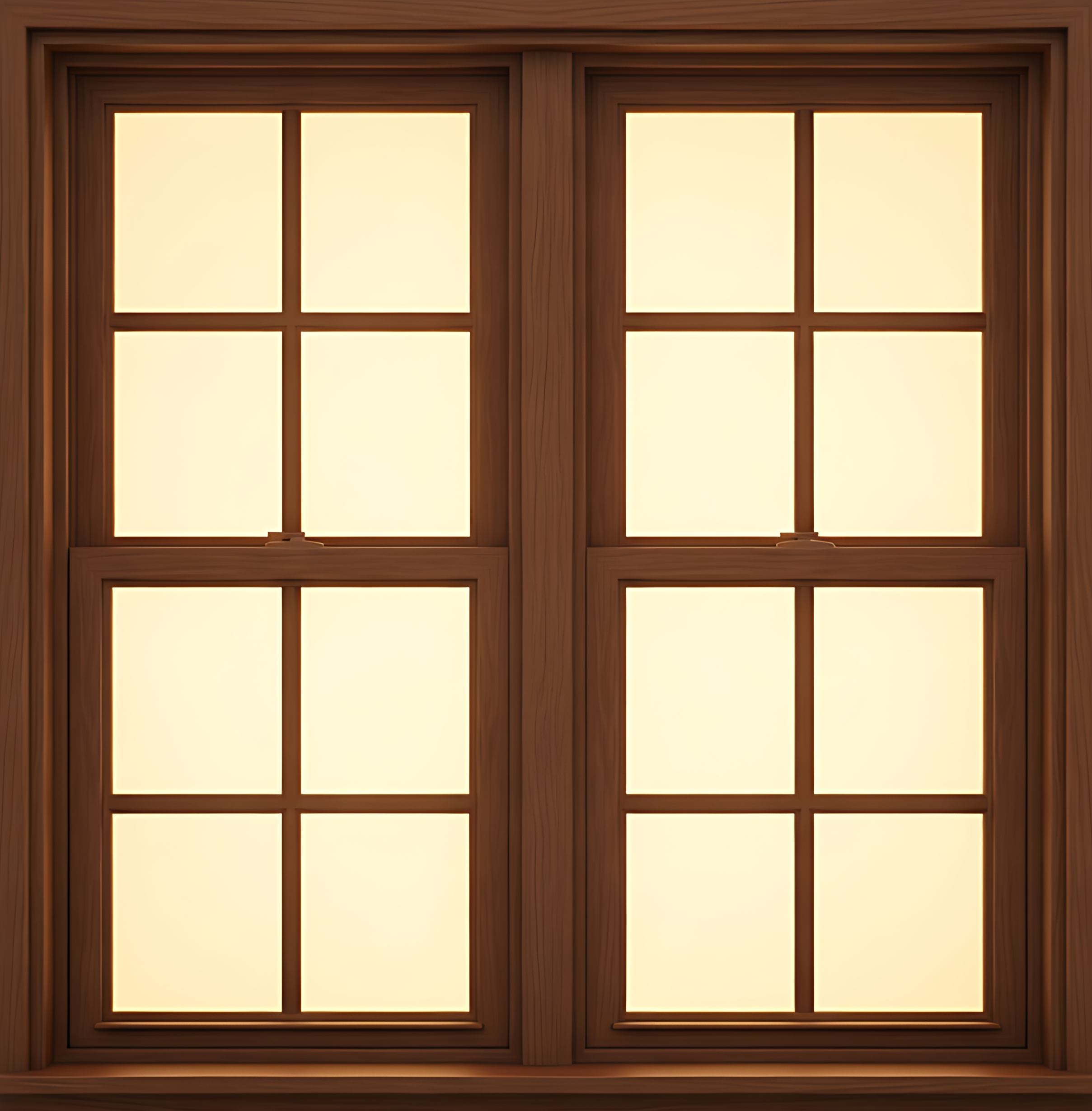}\hspace{0.5cm}
  \includegraphics[width=0.23\textwidth]{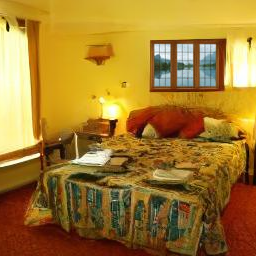}\hspace{0.5cm}
  \includegraphics[width=0.23\textwidth]{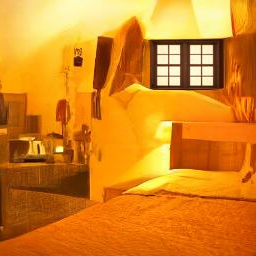}\hfill
    \caption{new{Disjunctive two-window constraint, a non-convex safe set
    $\{\text{match window} A\} \cup \{\text{match window} B\}$. Window choice A is given in Figure~\ref{fig:window_results}(a), and window choice B is the \emph{right} image. The constraint is applied at a
    fixed patch location (as in Section~\ref{sec:expt-location}), only the random seed
    varies. The framework selects a reference window based on the initial noise, with no
    manual choice, and renders the selected window to within tolerance. Different seeds
    select different windows, confirming both basins of the non-convex set are
    individually reachable. Both reference windows are in-distribution for the
    pre-trained bedroom model, so the selected window integrates naturally into the
    surrounding scene.}}
    \label{fig:disjunctive-window}
\end{figure}

This experiment isolates the non-convexity that
Theorem~\ref{thm:discrete-invariance} already admits in principle: the safe set is a
union of basins, and the guidance mechanism navigates to one of them while preserving
the safety certificate throughout sampling. We note that the experiment succeeds
because both references are compatible with the generative prior (both are realistic
windows that the bedroom model readily produces). When a reference is far from the
data manifold, the constraint still admits a formal guarantee with respect to the
specified barrier, but enforcement requires larger control effort and may degrade
perceptual quality---consistent with the cooperation principle (Contribution~2): our
framework is most effective when the constraint can be satisfied in a manner
compatible with the model's learned structure.
}

\new{
\subsection{Ablation: cutoff steering control}
\label{app:release-ablation}

A central claim of our framework is that the constricting tube concentrates
constraint enforcement in the high-noise regime, where interventions are
distributionally cheap, and that the control input vanishes ($u^\ast \to 0$) as
the trajectory enters the low-noise regime near $t = 0$. Disabling the controller during the low-noise phase should have
negligible effect on the final sample, whereas disabling it during the
high-noise phase should cause the constraint to fail. We test this directly by
truncating the control input at a release time $t_{\mathrm{off}}$: the QP
\eqref{eqn:qp-discrete} is solved and applied while $t > t_{\mathrm{off}}$, and the
sampling proceeds unguided (vanilla) for $t \le t_{\mathrm{off}}$. Recall that
sampling runs in reverse time from $t = T$ to $t = 0$. A small
$t_{\mathrm{off}}$ therefore means the controller remains active for almost the
entire trajectory and is released only in the final steps, while a large
$t_{\mathrm{off}}$ means the controller is switched off early and the model
samples freely through the entire refinement phase. We use the spatial localization constraint of Section~\ref{sec:expt-location} with a
fixed window reference, a fixed location, a single fixed seed, and the linear
constriction scheme, so that every condition shares the same noise realization
$\{\xi(t)\}$ and the same initial relaxation $\epsilon_0$. The only variable is
$t_{\mathrm{off}}$. Figure~\ref{fig:release-ablation} reports the resulting
samples for $t_{\mathrm{off}} \in \{0.05,\, 0.3,\, 0.6,\, 0.9\}$ alongside the
fully unguided baseline.

The results reveal a coarse-to-fine degradation rather than a simple
loss of the constraint. For $t_{\mathrm{off}} = 0.05$, where the controller is
released only in the final few low-noise steps, the constrained window patch is
reproduced with full fidelity: the frame, the reflected landscape, and even the
fine mullion lines across the glass are preserved, and the result is visually
indistinguishable from the always-on case. Releasing this late costs essentially
nothing because the tube has already tightened around the safe set and
$u^\ast \approx 0$. At $t_{\mathrm{off}} = 0.3$, the coarse content of the patch
survives, but the fine mullion lines within the glass are lost, the
individual panes merging into smoother fields. This is precisely the detail that
the model resolves during the low-noise phase that we have now left
uncontrolled, and its selective disappearance is direct evidence of the
coarse-to-fine structure the tube is designed to exploit. At
$t_{\mathrm{off}} = 0.6$ the region still reads as a window, the dark frame and
its placement persist, but the constraint is substantially violated: the
landscape content is gone and the patch blends into the surrounding wall. By
$t_{\mathrm{off}} = 0.9$, where the controller acts only during the first few
high-noise steps, the constraint vanishes entirely and the model reclaims the
region with an unrelated headboard. The unguided baseline, with no control at
any step, produces an unrelated scene with no patch. This ablation isolates \emph{where along the sampling horizon} the steering
secures each scale of the constraint. Fine, high-frequency detail is enforced
only by control applied late in sampling. Coarse structure and placement are set
earlier. Releasing during the high-noise phase loses the constraint details, but the structure of the window is retained. The progression is consistent with the invariance argument
of Theorem~\ref{thm:discrete-invariance}, which requires the CBF condition to hold at
every step down to $t = 0$ for exact satisfaction. Conversely, the
indistinguishability of the $t_{\mathrm{off}}=0.05$ and always-on samples
provides direct empirical support for the low-noise vanishing of $u^\ast$
predicted by the KL-divergence analysis of Theorem~\ref{thm:discrete-kl} and the
$1/g(t)^2$ cost structure in \eqref{eqn:discrete-kl}: in the regime where intervention is
most expensive in distributional terms, the controller has nothing left to do.

\begin{figure}[t]
    \centering
    \includegraphics[width=0.18\linewidth]{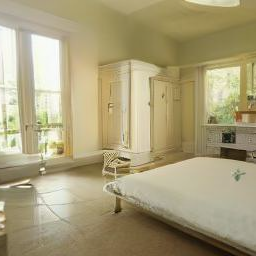}\hfill
    \includegraphics[width=0.18\linewidth]{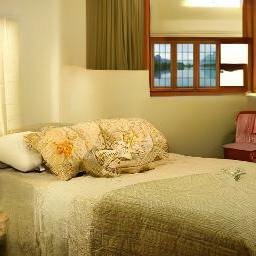}\hfill
    \includegraphics[width=0.18\linewidth]{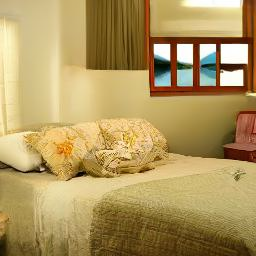}\hfill
    \includegraphics[width=0.18\linewidth]{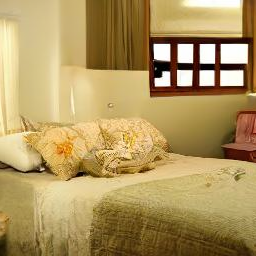}\hfill
    \includegraphics[width=0.18\linewidth]{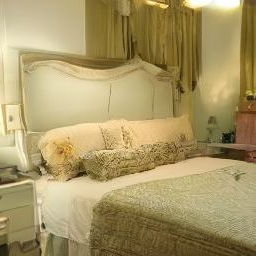}
    \\[0.4em]
    \makebox[0.19\linewidth]{\small (a) unguided}\hfill
    \makebox[0.19\linewidth]{\small (b) $t_{\mathrm{off}}=0.05$}\hfill
    \makebox[0.19\linewidth]{\small (c) $t_{\mathrm{off}}=0.3$}\hfill
    \makebox[0.19\linewidth]{\small (d) $t_{\mathrm{off}}=0.6$}\hfill
    \makebox[0.19\linewidth]{\small (e) $t_{\mathrm{off}}=0.9$}
    \caption{Effect of the control cutoff
    (single fixed seed, linear constriction, spatial localization constraint of
    Section~\ref{sec:expt-location}). Control is applied while $t > t_{\mathrm{off}}$
    and disabled for $t \le t_{\mathrm{off}}$. Sampling runs in reverse time from
    $t=T$ to $t=0$, so a small $t_{\mathrm{off}}$ keeps control active for almost
    the entire trajectory. \textbf{(a)} Unguided sampling: an unrelated scene
    with no patch. \textbf{(b)} $t_{\mathrm{off}}=0.05$: full fidelity, including
    the fine lines inside glass window, indistinguishable from always-on control.
    \textbf{(c)} $t_{\mathrm{off}}=0.3$: coarse content and placement preserved,
    but the finer glass details are lost. \textbf{(d)}
    $t_{\mathrm{off}}=0.6$: the window's coarse characteristics persist but the
    constraint is substantially violated. \textbf{(e)} $t_{\mathrm{off}}=0.9$:
    the constraint vanishes and the model reclaims the region. Control applied
    later in sampling enforces progressively finer scales of the constraint.}
    \label{fig:release-ablation}
\end{figure}

}

\subsection{Constrained image generation with latent diffusion models}
\label{app:latent_diffusion}

We investigate extending our CBF framework to latent diffusion models~\citep{RR-AB-etal:2022}, where the diffusion process operates in a compressed latent space rather than directly in pixel space. This extension is motivated by the scalability requirements of modern text-to-image models such as Stable Diffusion, which perform sampling in a low-dimensional latent space $\mathbb{R}^{4 \times 64 \times 64}$ that is decoded to images in $\mathbb{R}^{3 \times 512 \times 512}$ via a variational autoencoder (VAE).

Let $E: \mathbb{R}^n \to \mathbb{R}^d$ and $D: \mathbb{R}^d \to \mathbb{R}^n$ denote the VAE encoder and decoder, respectively, where $d \ll n$. Given a constraint defined in image space via a barrier function $h: \mathbb{R}^n \to \mathbb{R}$, we define the \emph{latent barrier function} as the composition
\begin{equation}
    \bar{h}(z) := h(D(z)),
    \label{eq:latent_barrier}
\end{equation}
which lifts the image-space constraint into the latent space via the decoder. The constricting barrier becomes $\tilde{h}(z, t) = \bar{h}(z) + \epsilon(z(T), t)$, and the CBF condition for the latent dynamics $\mathrm{d}z = [f_\theta(z, t) + u]\mathrm{d}t + g(t)\mathrm{d}w$ requires the gradient
\begin{equation}
    \nabla_z \bar{h}(z) = \left(\frac{\partial D}{\partial z}\right)^\top \nabla_x h(x)\bigg|_{x = D(z)},
    \label{eq:latent_gradient}
\end{equation}
which is computed via a single backward pass through the decoder using automatic differentiation. The QP structure of Algorithm~\ref{algo:guided-sampling-algo} is preserved, with the state space replaced by $\mathbb{R}^d$. We apply the spatial localization constraint from Section~\ref{sec:safe_image}.1 to Stable Diffusion v1.5 (\texttt{sd-legacy/stable-diffusion-v1-5}) from Hugging Face Diffusers. The prompt to the model is ``high quality image of a rustic bedroom''. We constrain a rectangular region in the generated image to match a reference window patch, using the same per-pixel barrier function~\eqref{eqn:spatial_barrier} evaluated on the decoded image $\hat{x} = D(z)$. The latent gradient~\eqref{eq:latent_gradient} is backpropagated through the VAE decoder at each sampling step.

Figure~\ref{fig:latent_image_gen} presents the results. The CBF-guided latent diffusion model responds to the spatial constraint: a visually distinct region appears at the specified location, indicating that the latent-space control input successfully biases the sampling trajectory toward the constrained region. However, the reference window patch is not preserved with pixel-level fidelity. The constraint region appears blurred, color-shifted, and blended into the surrounding scene, in contrast to the exact preservation achieved in pixel-space experiments (Figure~\ref{fig:window_results} and~\ref{fig:pillow_results}). This degradation stems from a fundamental limitation that the VAE decoder $D$ is not a diffeomorphism. The mapping from latent to image space is locally non-invertible and many-to-one, meaning that multiple latent codes can decode to similar but non-identical images. Consequently, enforcing $\bar{h}(z) = h(D(z)) \geq 0$ in latent space does not guarantee that the decoded image $\hat{x} = D(z)$ satisfies $h(\hat{x}) \geq 0$ with the same precision. The decoder introduces reconstruction artifacts that distort the constraint boundary, causing the gradient $\nabla_z \bar{h}$ to point in directions that are approximately but not exactly aligned with the true image-space constraint.

\begin{figure}[htbp]
    \centering
    \begin{tikzpicture}
        \node at (-3,0) {\includegraphics[width=0.3\textwidth]{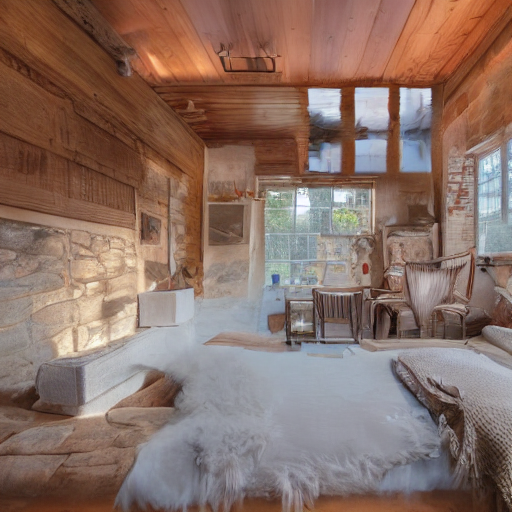}};
        \node at (3,0) {\includegraphics[width=0.3\textwidth]{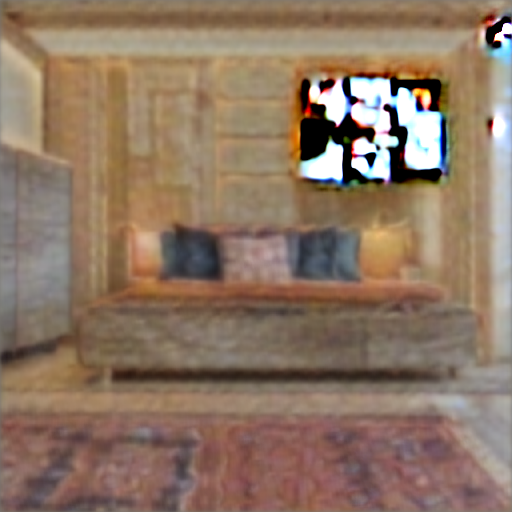}};
        \node at (-3,-3) {(a)};
        \node at (3,-3) {(b)};
    \end{tikzpicture}
    \caption{Constrained image generation with latent diffusion models with prompt ``high-quality image of a rustic bedroom''. The constraint is to produce the reference window in Figure~\ref{fig:window_results}(a) in the top right corner, same as in Section~\ref{sec:safe_image}.1. We see that the model responds to the constraint by producing a visually distinct region at the target location, but the reference patch is not preserved with pixel-level fidelity due to the VAE decoder's non-invertibility. }
    \label{fig:latent_image_gen}
\end{figure}

Formally, the safety guarantee of Theorem~\ref{thm:discrete-invariance} holds with respect to the composed barrier $\bar{h}(z) = h(D(z))$: if $\tilde{h}(z(0), 0) \geq 0$, then $h(D(z(0))) \geq 0$. The issue is that the safe set $\bar{\C} = \{z \mid h(D(z)) \geq 0\}$ in latent space does not correspond exactly to the intended safe set in image space when the decoder has non-trivial reconstruction error. This is not a failure of the CBF framework itself, but rather a limitation of the representation space in which the diffusion process operates.

\end{document}